\theoremstyle{plain}
\newtheorem*{theorem*}{Theorem}
\newtheorem{theorem}{Theorem}
\newtheorem*{lemma*}{Lemma}
\newtheorem*{corollary*}{Corollary}
\newtheorem*{definition*}{Definition}
\theoremstyle{remark}
\DeclareMathOperator{\ilr}{ilr}
\DeclareMathOperator{\tr}{tr}
\pgfplotsset{compat=1.18}
\newcommand{\BibTeX}{B\kern-.05em{\sc i\kern-.025em b}\kern-.08em\TeX}
\begin{document}

\title{Explaining a probabilistic prediction on the simplex\\with Shapley compositions}
\author{Paul-Gauthier Noé\footnote{Laboratoire Informatique d'Avignon, Avignon Université, France}, Miquel Perelló-Nieto\footnote{University of Bristol, United Kingdom}, Jean-François Bonastre\footnotemark[1], Peter Flach\footnotemark[2]}
\date{}
\maketitle

\begin{abstract}
Originating in game theory, Shapley values are widely used for explaining a machine learning model’s prediction by quantifying the contribution of each feature’s value to the prediction. This requires a scalar prediction as in binary classification, whereas a multiclass probabilistic prediction is a discrete probability distribution, living on a multidimensional simplex. In such a multiclass setting the Shapley values are typically computed separately on each class in a one-vs-rest manner, ignoring the compositional nature of the output distribution. In this paper, we introduce \emph{Shapley compositions} as a well-founded way to properly explain a multiclass probabilistic prediction, using the Aitchison geometry from compositional data analysis. We prove that the Shapley composition is the unique quantity satisfying linearity, symmetry and efficiency on the Aitchison simplex, extending the corresponding axiomatic properties of the standard Shapley value. We demonstrate this proper multiclass treatment in a range of scenarios.
\end{abstract}

\paragraph{Remark} This work is published in the proceedings of ECAI 2024. The present document gathers the full paper with the supplementary material. For citing this work, please refer to the version in the ECAI's proceeding.

\newpage
\section{Introduction}

Many machine learning approaches
are regarded as black-boxes, making them unreliable for real-life applications where the model's predictions need to be understood or explained. In recent years, the interest in more interpretable models and explainability methods has therefore increased in the machine learning literature \cite{angelov2021explainable,gunning_xaiexplainable_2019}. 
One group of approaches, known as \emph{local explanation}, aims to measure the contribution of each input feature's value to the computation of the model's output.
Shapley values are widely used for this purpose \cite{vstrumbelj2014explaining,datta2016}, especially since the release of the SHAP toolkit \cite{NIPS2017_7062}\footnote{\url{https://shap.readthedocs.io/en/latest}}. 

Shapley values were introduced in cooperative game theory where a group of players work together to maximise a payoff. A set of Shapley values distributes the payoff over all the players according to their individual contribution to the total. The Shapley value is the unique quantity that satisfies a set of desired axiomatic properties \cite{shapley1953value}. For explaining a machine learning model's prediction, features are treated as players and the scalar output of the model as the total payoff.

The Shapley value is designed for a one-dimensional function's codomain. In game theory, the characteristic function takes a coalition of players and gives a payoff. In machine learning, for a given instance, the characteristic function takes a group of features and gives a scalar output measuring how the prediction changes when the values of the features are considered. For a two-class probabilistic classification, the prediction is essentially a scalar since the probabilities for the two classes sum to one. Therefore, the Shapley value framework can simply be applied to the logit transform of one of the probabilities\footnote{The logit maps the domain $]0,1[$ to the additive real line $\mathbb{R}$.}.

For more than two possible classes the output of the model is a discrete probability distribution or the output of a softmax function as commonly used by neural networks. Hence the output lives on a $(D-1)$-dimensional simplex, where $D$ is the number of classes. In this case, the Shapley value framework cannot be directly applied. Of course, one can compute Shapley values on each output probability separately, but this ignores the structure of the simplex where the relative values between the probabilities is what really matters, rather than the absolute value of a single probability.
\begin{figure}[!b]
    \centering
    \includegraphics[width=0.55\linewidth]{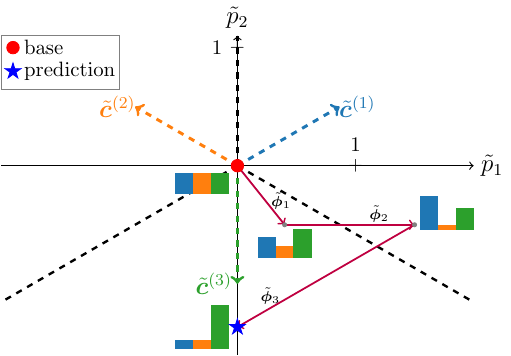}
    \caption{A synthetic example of a Shapley composition-based explanation. This shows a $2$-dimensional space isomorphic to the $3$-class simplex where each point is a probability distribution as visualised with the histograms. The dashed black rays separate the maximum probability regions for each class and the dashed coloured vectors show the direction in favour of one class and against the other two. The space is additive such that the features' contributions $\{\tilde{\bm{\phi}}_i \}_{\{1,2,3\}}$ translate the base distribution to the prediction.}
    \label{fig:intro}
\end{figure}

This paper presents \emph{Shapley composition} as an extension of the Shapley value to the space of discrete probability distributions, using the \emph{Aitchison geometry of the simplex} from the field of compositional data analysis. 
Compositional data \cite{aitchison1982,pawlowskymodeling} are vectors -- known as compositions -- living on a simplex (not necessarily a probability simplex). Compositional data analysis has been applied to geological and chemical data, for example, but also to discrete probability distributions \cite{egozcue2011evidence,egozcue2018evidence,noe2023representing}.
In the present paper, the probability distributions given by a classifier will be treated as compositional data in order to extend the Shapley value to multiclass classification.

Figure \ref{fig:intro} shows a synthetic example to provide some intuitions.
It shows a $2$-dimensional space isomorphic to the $3$-class-simplex where each point is a probability distribution also visualised as a histogram. The maximum probability regions, for each class, are clearly visible and separated by dashed black rays. 
Importantly, this space of probability distribution is additive thanks to the Aitchison geometry. 
The vectors show how the contribution of each feature changes -- in an additive manner -- the probability distribution (the ordering of the features can be chosen freely but the final point is fixed). 

In the example, the base distribution (the average prediction over all data\footnote{Note that this does not need to be the uniform distribution, i.e., the origin.}) is modified by the contribution $\tilde{\bm{\phi}}_1$ of the first feature. This goes mostly against class 2 such that the resulting distribution has the lowest probability for class 2. The angle between $\tilde{\bm{\phi}}_1$ and the class-3 direction being the lowest among the classes, the resulting distribution has the highest probability for this class. The second feature moves the distribution into the class-1 region, perpendicular to the class-3 direction. The probability for class 1 is now maximum by reducing the probability for class 2, keeping the relative weight for class 3 unchanged. The third feature moves the distribution away from class 1. The resulting distribution being on the class-3 direction, the probability is maximum for class 3 and uniform for the other two.

We fully formalise the approach in this paper, making the following contributions: 
\begin{itemize}
    \item We define \emph{Shapley composition} as a principled multidimensional extension of the Shapley value to the probability simplex,
    \item We prove that the Shapley composition is the unique quantity satisfying the set of desired properties known as \emph{linearity}, \emph{symmetry} and \emph{efficiency} on the simplex equipped with the Aitchison geometry,
    \item We demonstrate the advantages of Shapley compositions for explaining a multiclass probabilistic prediction in machine learning.
\end{itemize}
The paper is structured as follows. 
Section \ref{sec:related} briefly reviews related work. 
Section \ref{sec:shapley} recalls the standard definition of the Shapley value and its use in binary classification. 
Section \ref{sec:compo} presents the necessary tools from compositional data analysis: in particular, the Aitchison geometry of the simplex and the isometric log-ratio transformation. 
Section \ref{sec:shapcompo} defines the Shapley composition as an extension of the Shapley value framework to the multidimensional simplex using the Aitchison geometry. 
Section \ref{sec:explain} shows with intuitive examples and visualisations how Shapley compositions can be used for explaining multiclass probabilistic predictions. 
Section \ref{sec:conclud} provides a short discussion and concludes the paper\footnote{The code and Jupyter notebooks are available on the github page: \url{https://github.com/orgs/shapley-composition}.}.
\newpage
\section{Related work}\label{sec:related}

There is a plethora of methods in the literature to explain and better understand predictive models.
They focus on different aspects of the task, from possible dataset biases, the feature importance with respect to the target, the parameters of a model after training, or the model predictions \cite{sokol_explainability_2020}.
Some methods explain the influence of the features on the model's performance: e.g., Permutation Feature Importance for random forest \cite{breiman_random_2001}, which was later extended to the model agnostic Model Reliance \cite{fisher_all_2019}.
Other methods focus on how individual features influence the model's predictions: e.g., Local Interpretable Model-agnostic Explanations (LIME) \cite{ribeiro_why_2016}, Individual Conditional Explanation \cite{goldstein_peeking_2015}, Partial Dependence-based Feature Importance \cite{greenwell_simple_2018}, Marginal Effect \cite{apley_visualizing_2020},  Accumulated Local Effect \cite{apley_visualizing_2020}, and Shapley value-based approaches \cite{vstrumbelj2014explaining,datta2016,NIPS2017_7062}.

We base our work on the Shapley value framework as one of the most well-founded feature influence methods. Inherently two-class, it has been applied to multiclass problems by explaining the influence of the features in a one-vs-one or one-vs-rest manner \cite{yoo2020,lamens_explaining_2023}, hence losing information that can be obtained by properly considering the full distribution. \citet{utkin_imprecise_2021,Utkin2023} explicitly consider the classifier output as a probability distribution, and measure the change in prediction in terms of statistical distance or divergence rather than in terms of difference between scalar predictions. However, even if this approach can measure the strength of a feature's value effect, it loses its directional information.

A recent work presented by \citet{franceschi2023} (later extended \citep{franceschi2024explaining}) introduces stochastic characteristic functions to deal with models that output a random variable. With a categorical random variable, their approach can be used for explaining a multiclass classifier by allowing probabilistic statements about the likelihood of a feature to flip the decision from one class to another. In contrast, the approach we propose does not require an additional stochastic process but does not permit such a probabilistic statement. Instead, our approach is geometrical, by measuring how a feature moves the prediction on the probability simplex. In this way, it constitutes a natural extension of the standard Shapley value to the simplex for multiclass applications.

\section{The Shapley value in machine learning}
\label{sec:shapley}

This section briefly recalls Shapley values as used for explaining features' contribution on a scalar prediction in machine learning.
Let $f:\mathcal{X}\to\mathbb{R}$ be a learned model one wants to \emph{locally} explain where $f(\bm{x})$ is the prediction on the instance $\bm{x}\in\mathcal{X}\subset\mathbb{R}^d$. 
Let $\text{Pr}$ be the probability distribution of the data over $\mathcal{X}$ (usually unknown but approximated by empirical averages). 
Let $S\subseteq \mathcal{I}=\{1,2,\dots d\}$ be a subset of indices where $d$ is the number of features. $\bm{x}_S$ refers to an instance $\bm{x}$ restricted to the features with indices in $S$.

When an instance $\bm{x}$ is observed, the expected value of the prediction is simply $\mathbb{E}[f(\bm{X}) \mid \bm{x}] = f(\bm{x})$. However, when only $\bm{x}_S$ is given with $\mathcal{S} \neq \mathcal{I}$, there is uncertainty about the non-observed features and the expected prediction given $\bm{x}_S$ is computed as $\mathbb{E}_{\text{Pr}}[f(\bm{X}) \mid \bm{x}_S] = \int_{\bm{x} \in \mathcal{X}}f(\bm{x})\text{Pr}(\bm{x} \mid \bm{x}_S)d\bm{x}$. The change in prediction when the values of the features indexed by $S$ are observed is measured by the characteristic function:
\begin{equation}
  \label{eq:valuefunction}
  \begin{aligned}
    v_{f,\bm{x},\text{Pr}}: 2^{\mathcal{I}} &\to \mathbb{R},\\
    S &\mapsto \mathbb{E}_\text{Pr}[f(\bm{X})\mid \bm{x}_S] - \mathbb{E}_\text{Pr}[f(\bm{X})],
  \end{aligned}
\end{equation}
where $2^{\mathcal{I}}$ is the set of all subsets of $\mathcal{I}$.
The contribution of the feature indexed by $i \notin S$ to the prediction, given the known values for the features indexed by $S$, is given by:
\begin{equation}
  \label{eq:contrib}
  c_{f,\bm{x},\text{Pr}}(i,S) = v_{f,\bm{x},\text{Pr}}({S\cup\{i\}}) - v_{f,\bm{x},\text{Pr}}(S).
\end{equation} 
The total contribution of the $i$th feature is computed by averaging this quantity over all possible coalitions $S$ as follows:
\begin{equation}
  \phi_{i}\left( {f,\bm{x},\text{Pr}} \right) = \frac{1}{d!} \sum_{\pi}c_{f,\bm{x},\text{Pr}}(i,\pi^{<i}),
\end{equation}
where $\pi$ is a permutation of the set $\mathcal{I}$ of indexes and $\pi^{<i}$ is the set of indexes before $i$ in the ordering given by $\pi$. For better clarity, ``${f,\bm{x},\text{Pr}}$'' or simply ``$\bm{x},\text{Pr}$'' will be dropped from the equations.

This quantity is known as the Shapley value for the $i$th feature. It originates from cooperative game theory and is the unique quantity respecting a set of desired axiomatic properties \cite{shapley1953value,vstrumbelj2014explaining}: 
\begin{description}
\item[~~~Linearity] with respect to the model: \\$\alpha, \beta \in \mathbb{R}$, $\forall i \in \mathcal{I},~\phi_{i}\left( {\alpha f +\beta g }\right) = \alpha \phi_i \left( f \right) + \beta \phi_i\left(g\right)$;
\item[~~~Symmetry:]~\\$\forall S \subseteq \mathcal{I} \backslash \{i,j\}$, $v\left({S \cup \{ i \} }\right) = v\left({S \cup \{ j \} }\right) \Rightarrow \phi_i = \phi_j$;
\item[~~~Efficiency:] The ``centered'' prediction is additively separable with respect to the Shapley values:
\begin{equation}
  f(\bm{x})-\mathbb{E}_{\text{Pr}}[f(\bm{X})] = \sum_{i=1}^{d} \phi_i\left( {f,\bm{x},\text{Pr}} \right).
\end{equation}
\end{description}
Efficiency ensures that the change in prediction when the features are observed is distributed among them. In other words, the cumulative sum of the Shapley values moves the averaged prediction (also called \emph{base} prediction) to the actual one.

The Shapley value is designed for a characteristic function with a scalar codomain. For explaining machine learning models which output multidimensional discrete probability distributions, like in multiclass classification, one could explain each output dimension separately, 
resulting in a one-vs-rest comparison. However, this approach ignores the relative information between each probability and ignores the compositional nature of the discrete probability distributions. Indeed, the probabilistic output of a classifier lives on a multidimensional simplex. The latter is the sample space of \emph{compositional data} briefly reviewed in the next section.

\section{Compositional data}
\label{sec:compo}

Compositional data carries relative information. Each element of a composition \emph{describes a part of some whole} \cite{pawlowskymodeling}, such as vectors of proportions, concentrations, and discrete probability distributions. An $D$-part composition is a vector of $D$ non-zero positive real numbers that sum to a constant $k$. Each element of the vector is a part of the \emph{whole} $k$. The sample space of compositional data is the $(D-1)$-dimensional simplex: 
\begin{equation}
\mathcal{S}^D = \left\{ \bm{x} = [x_1, x_2,\dots x_{D}]^T \in \mathbb{R}^{*D}_{+} \mid \sum_{i=1}^{D} x_i = k \right\}.
\end{equation}
In a composition, only the relative information between parts matters and John Aitchison introduced the use of log-ratios of parts to handle this \cite{aitchison1982}. He defined several operations on the simplex which leads to what is called the \emph{Aitchison geometry of the simplex}.

\subsection{The Aitchison geometry of the simplex}
Only the relative information between the parts of a composition matters. Compositions are therefore scale-invariant. This is materialised by the closure operator defined for $k>0$ as: 
\begin{equation}
\mathcal{C}\left(\bm{x} \right) = \left[ \frac{k x_1}{\lVert \bm{x} \rVert_1}, \frac{k x_2}{\lVert \bm{x} \rVert_1} ,\dots \frac{k x_{D}}{\lVert \bm{x} \rVert_1} \right]^T \in \mathcal{S}^D,
\end{equation}
where $\bm{x} \in \mathbb{R}_+^{*D}$ and $ \lVert \bm{x} \rVert_1 = \sum_{i=1}^{D} \lvert x_i \rvert$.

Aitchison defined on the simplex the following three operations \cite{aitchison2001}:
\begin{description}
\item[Perturbation:] $\bm{x}\oplus \bm{y} = \mathcal{C}\left([x_1y_1,\dots x_{D}y_{D}]\right)$,
  seen as an addition between two compositions;
\item[Powering:] $\alpha \odot \bm{x} = \mathcal{C}\left([x_{1}^{\alpha},\dots x_{D}^{\alpha}]\right)$,
  seen as a multiplication by a scalar $\alpha \in \mathbb{R}$;
\item[Inner product:]
  \begin{equation}
    \langle \bm{x},\bm{y} \rangle_{\mathcal{A}} = \frac{1}{2D}\sum_{i=1}^{D} \sum_{j=1}^{D} \log \frac{x_i}{x_j}\log \frac{y_i}{y_j}.
    \label{eq:inner}
  \end{equation}
\end{description}
In this paper, since we are interested in classification problems where the set of classes represents a set of exhaustive and mutually exclusive hypotheses, the output of a probabilistic classifier is a discrete probability distribution over the set of classes. We therefore restrict ourselves to the \emph{probability simplex} where $k=1$.

\subsection{The isometric log-ratio transformation}
\label{sec:ilr}

A $(D-1)$-dimensional orthonormal basis of the simplex, referred to as an \emph{Aitchison} orthonormal basis, can be built. The projection of a composition into this basis defines an isometric isomorphism between $\mathcal{S}^D$ and $\mathbb{R}^{D-1}$. This is known as an isometric log-ratio (ILR) transformation \cite{egozcue2003isometric} and allows to express a composition into a Cartesian coordinate system preserving the metric of the Aitchison geometry. Within this real space, the perturbation, the powering and the Aitchison inner product are respectively the standard addition between two vectors, the multiplication of a vector by a scalar, and the standard inner product.

Given a composition $\bm{p} = \left[ p_1,\dots p_{D} \right]^T \in \mathcal{S}^D$ we write its ILR transformation as $\tilde{\bm{p}} = \ilr \left( \bm{p} \right) = \left[ \tilde{p}_1,\dots \tilde{p}_{D-1} \right]^T \in \mathbb{R}^{D-1}$. The $i$th element $\tilde{p}_i$ of $\tilde{\bm{p}}$ is obtained as: $\tilde{p}_i = \langle \bm{p}, \bm{e}^{(i)} \rangle_{\mathcal{A}}$ where the set $\{\bm{e}^{(i)} \in \mathcal{S}^D\}_{1\leq i \leq D-1}$ forms an \emph{Aitchison} orthonormal basis of the simplex. The basis can be obtain through the Gram-Schmidt procedure or by building a sequential binary partition \cite{egozcue2003isometric,egozcue2005groups}. Examples are discussed in Section \ref{sec:balances}.

In the introductory example of Figure \ref{fig:intro}, the $2$-dimensional ILR space isomorphic to the $3$-class probability simplex was constructed as follows:
\begin{equation*}
        \tilde{p}_1 = \frac{1}{\sqrt{2}} \log \frac{p_1}{p_2}, \quad
        \tilde{p}_2 = 
                       \sqrt{\frac{2}{3}}\log \frac{\sqrt{p_1 p_2}}{p_3}. 
\end{equation*}
Hence, the $x$-axis compares the probabilities for classes 1 and 2,
the $y$-axis compares the probability for class 3 with the geometric mean of $p_1$ and $p_2$, and
the origin corresponds to the uniform distribution, i.e.,
the neutral element for the perturbation. 
Note that the perturbation can be seen as a Bayesian update: the perturbation of a prior by a likelihood function gives the posterior. In the space of isometric log-ratio transformed distributions, the Bayes update is a vector translation.

\section{Shapley composition on the simplex}
\label{sec:shapcompo}

In this section we will use the Aitchison geometry to extend the Shapley value from Section \ref{sec:shapley} to the simplex for explaining a multiclass probabilistic prediction.
Let $\bm{f}:\mathcal{X}\to\mathcal{S}^D$ be a learned model 
which outputs a prediction on the $(D-1)$-dimensional probability simplex $\mathcal{S}^{D}$. In order to properly consider the structure of the simplex and the relative information between the probabilities, the model's output is treated as compositional data using the operators from the Aitchison geometry of the simplex. We therefore rewrite the characteristic function and the contribution of Equations \ref{eq:valuefunction} and \ref{eq:contrib} as follows:
\begin{equation}
  \label{eq:valuefunctionsimplex}
  \begin{aligned}
    \bm{v}_{\bm{f},\bm{x},\text{Pr}}: 2^{\mathcal{I}} &\to \mathcal{S}^D,\\
    S &\mapsto \mathbb{E}^{\mathcal{A}}_\text{Pr}[\bm{f}(\bm{X})\mid \bm{x}_S]\ominus \mathbb{E}^{\mathcal{A}}_\text{Pr}[\bm{f}(\bm{X})].
  \end{aligned}
\end{equation}
\begin{equation}
  \bm{c}_{\bm{f},\bm{x},\text{Pr}}(i,S) = \bm{v}_{\bm{f},\bm{x},\text{Pr}}({S\cup\{i\}}) \ominus \bm{v}_{\bm{f},\bm{x},\text{Pr}}(S),
\end{equation}
where $\bm{a}\ominus\bm{b}$ is the perturbation $\bm{a} \oplus \left( (-1)\odot \bm{b}\right)$ which corresponds to a subtraction between two compositions, and where the $\mathcal{A}$ in superscript highlights the fact that the expectation is taken with respect to the Aitchison measure. 
This can be computed as: $\mathbb{E}^{\mathcal{A}}[\bm{Y}] = \ilr^{-1}\left( \mathbb{E} \left[ \ilr\left( \bm{Y} \right) \right] \right)$,
where $\mathbb{E}^{\mathcal{A}}$ refers to the expectation with respect to the Aitchison measure while $\mathbb{E}$ refers to the expectation with respect to the Lebesgue measure \cite{pawlowskymodeling}.

The Shapley quantity expressing the contribution of the $i$th feature's value on a prediction can  be expressed on the simplex as the composition $\bm{\phi}_{i}$ given by:
\begin{equation}
  \bm{\phi}_{i}\left( {\bm{f},\bm{x},\text{Pr}} \right) = \frac{1}{d!} \odot \underset{\pi}{\bigoplus}\bm{c}_{\bm{f},\bm{x},\text{Pr}}(i,\pi^{<i}).
\end{equation}
We call this quantity \emph{Shapley composition}. Note that the average is here with respect to the Aitchison geometry, i.e.~with perturbations and a powering rather than sums and a scaling.

The following is the main theoretical result of the paper. 
\begin{theorem}
\label{theo:shap}
The Shapley composition is the unique quantity that satisfies the following properties on the Aitchison simplex:
\begin{description}
    \item[~~~Linearity] with respect to the model: \\$\alpha, \beta \in \mathbb{R}$, $\forall i \in \mathcal{I},\\ \bm{\phi}_i\left( \alpha \odot \bm{f} \oplus \beta \odot \bm{g} \right) = \alpha \odot \bm{\phi}_i\left( \bm{f} \right) \oplus \beta \odot \bm{\phi}_i\left( \bm{g} \right)$;
    \item[~~~Symmetry:] $\\ \forall S \subseteq \mathcal{I} \backslash \{i,j\}$, $\bm{v}\left({S \cup \{ i \} }\right) = \bm{v}\left({S \cup \{ j \} }\right) \Rightarrow \bm{\phi}_i = \bm{\phi}_j$;
    \item[~~~Efficiency:]
    \begin{equation}
    \underset{i=1}{\overset{d}\bigoplus} \bm{\phi}_{i}\left( \bm{f},\bm{x},\text{Pr} \right) = \bm{f}(\bm{x}) \ominus \mathbb{E}^{\mathcal{A}}_{\text{Pr}}[\bm{f}(\bm{X})].
    \end{equation}
\end{description}
\end{theorem}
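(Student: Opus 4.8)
The plan is to exploit the fact, recalled in Section~\ref{sec:ilr}, that the isometric log-ratio transformation $\ilr$ is an isometric isomorphism carrying perturbation, powering and the Aitchison inner product to ordinary addition, scalar multiplication and the Euclidean inner product on $\mathbb{R}^{D-1}$. The entire statement should then reduce, coordinate by coordinate, to the classical Shapley uniqueness result of Section~\ref{sec:shapley}, which I take as given. Concretely, I would set $\tilde{\bm{f}} = \ilr \circ \bm{f} : \mathcal{X} \to \mathbb{R}^{D-1}$, with scalar coordinate functions $\tilde{f}_1,\dots,\tilde{f}_{D-1}$, and show that the Shapley composition is nothing but the tuple of standard Shapley values of these coordinates.

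First I would translate the characteristic function. Using the identity $\mathbb{E}^{\mathcal{A}}[\bm{Y}] = \ilr^{-1}(\mathbb{E}[\ilr(\bm{Y})])$ together with the additivity of $\ilr$ over $\ominus$, one gets
\begin{equation*}
\ilr\bigl(\bm{v}_{\bm{f}}(S)\bigr) = \mathbb{E}[\tilde{\bm{f}}(\bm{X})\mid \bm{x}_S] - \mathbb{E}[\tilde{\bm{f}}(\bm{X})],
\end{equation*}
so that the $k$-th coordinate of $\ilr(\bm{v}_{\bm{f}}(S))$ is exactly the scalar characteristic function $v_{\tilde{f}_k}(S)$ of Equation~\ref{eq:valuefunction}. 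Since $\ilr$ also turns the perturbation-average defining $\bm{\phi}_i$ into an ordinary average, applying $\ilr$ to the Shapley composition gives
\begin{equation*}
\ilr\bigl(\bm{\phi}_i(\bm{f})\bigr) = \bigl(\phi_i(\tilde{f}_1),\dots,\phi_i(\tilde{f}_{D-1})\bigr),
\end{equation*}
i.e.\ the coordinates of the Shapley composition are the ordinary Shapley values of the coordinate games. This is the crux: it reduces every simplex-level assertion to $D-1$ independent scalar assertions.

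Next I would check that each of the three axioms is intertwined by $\ilr$. Linearity on the simplex becomes, after transforming, $\phi_i(\alpha \tilde{f}_k + \beta \tilde{g}_k) = \alpha\,\phi_i(\tilde{f}_k) + \beta\,\phi_i(\tilde{g}_k)$ for each $k$, because $\ilr(\alpha\odot\bm{f}\oplus\beta\odot\bm{g}) = \alpha\tilde{\bm{f}} + \beta\tilde{\bm{g}}$; this is precisely scalar linearity. For symmetry I would use that $\ilr$ is a bijection: $\bm{v}(S\cup\{i\}) = \bm{v}(S\cup\{j\})$ holds if and only if it holds in every coordinate, so the hypothesis transfers exactly and scalar symmetry yields $\phi_i(\tilde{f}_k) = \phi_j(\tilde{f}_k)$ for all $k$, hence $\bm{\phi}_i = \bm{\phi}_j$. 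Efficiency likewise becomes $\sum_i \phi_i(\tilde{f}_k) = \tilde{f}_k(\bm{x}) - \mathbb{E}[\tilde{f}_k(\bm{X})]$ in each coordinate.

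With this dictionary in place, uniqueness follows: if $\bm{\psi}_i$ is any quantity satisfying the three simplex axioms, then its $\ilr$-coordinates satisfy the three scalar axioms for each coordinate game, so by the classical theorem they coincide with $\phi_i(\tilde{f}_k)$, whence $\bm{\psi}_i = \ilr^{-1}(\phi_i(\tilde{f}_1),\dots) = \bm{\phi}_i$. I expect the main obstacle to be bookkeeping rather than conceptual: one must verify carefully that $\ilr$ intertwines the \emph{stochastic} objects as well as the algebraic ones -- in particular that the Aitchison expectation commutes with $\ilr$ as stated -- and that the symmetry hypothesis, being an implication, transfers in both directions thanks to the bijectivity of $\ilr$. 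Once these intertwining identities are nailed down, nothing beyond the scalar Shapley theorem is needed.
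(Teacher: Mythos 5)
Your identity $\ilr\bigl(\bm{\phi}_i(\bm{f})\bigr)=\bigl(\phi_i(\tilde f_1),\dots,\phi_i(\tilde f_{D-1})\bigr)$ is correct, and it does deliver the \emph{existence} half of the theorem: linearity, symmetry and efficiency for $\bm{\phi}_i$ follow coordinate-wise from the scalar statements. This is essentially what the paper's Appendix~\ref{app:proof} verifies directly, using the same commutation $\mathbb{E}^{\mathcal{A}}[\cdot]=\ilr^{-1}(\mathbb{E}[\ilr(\cdot)])$ and a telescoping sum for efficiency, so for that half your route is the same argument stated more compactly.

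The gap is in the \emph{uniqueness} half. To invoke the classical scalar uniqueness theorem in coordinate $k$, you need the $k$-th ILR coordinate of an arbitrary candidate $\bm{\psi}_i$ to be a value \emph{of the scalar game} $v_{\tilde f_k}$, i.e.\ a map defined on scalar games satisfying the scalar axioms. Nothing in the three simplex axioms gives you that: $\tilde\psi_{i,k}$ is a function of the whole vector-valued game and may mix coordinates. For instance, with $D-1=2$, set $\tilde\psi_{i,1}(\bm{v})=\phi_i(v_1)+\phi_i(v_2)-v_2(\mathcal{I})/d$ and $\tilde\psi_{i,2}(\bm{v})=\phi_i(v_2)$. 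This is linear in the game (hence in the model); it is efficient because the cross terms cancel when summed over $i$; and it satisfies the vector symmetry axiom, because the hypothesis $\bm{v}(S\cup\{i\})=\bm{v}(S\cup\{j\})$ forces symmetry of \emph{both} coordinate games --- note that the vector symmetry hypothesis is strictly stronger than symmetry of the single game $v_k$, which is precisely why you cannot conclude that each coordinate map is a symmetric scalar value. So the coordinate-wise reduction does not go through; indeed it cannot, without an additional null-player-type ingredient that excludes such cross-coordinate terms. The paper takes a genuinely different route here: it proves that the vector-unanimity games $\tilde{\bm{v}}_T^{(i)}$, indexed by a coalition $T$ \emph{and} a basis direction $i$, form a basis of the $(2^d-1)(D-1)$-dimensional space of vector-valued set functions, uses linearity to reduce to these basis games, and on each of them determines all $D-1$ coordinates of every $\bm{\phi}_k$ at once from symmetry, the full vector efficiency equation, and the assertion that $\bm{\phi}_j=\bm{u}$ for $j\notin T$ (the null-player step, which is where both the paper's argument and any repaired version of yours must put the missing weight). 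That basis lemma, or an equivalent device for handling candidates that do not act coordinate-wise, is what your plan is missing.
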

A proof of this result is given in Appendix \ref{app:proof}. Shapley compositions are thus the natural multidimensional extension of the Shapley value framework on the Aitchison simplex. 
In the next section
we give a number of compelling examples of how this can be used to explain multiclass probabilistic predictions. 

\section{Explaining a multiclass prediction with Shapley compositions}
\label{sec:explain}

Given a probabilistic prediction $\bm{f}(\bm{x}) \in \mathcal{S}^D$, the Shapley composition $\bm{\phi}_i \left( \bm{f},\bm{x},\text{Pr}\right)$ describes the contribution of the $i$th feature value to the prediction. The efficiency property shows how the probability distribution is perturbed from the \emph{base} distribution $\mathbb{E}^{\mathcal{A}}_{\text{Pr}}[\bm{f}(\bm{X})]$, i.e.~the expected prediction regardless of the current input, to the actual prediction $\bm{f}(\bm{x})$. In the standard Shapley formulation recalled in Section \ref{sec:shapley}, the prediction is one-dimensional such that the Shapley quantity is a scalar. In applications where there are more than two possible classes, the prediction is multidimensional such that the Shapley quantity (the Shapley composition) is too. Both live in the same space: the probability simplex. In this section, we discuss how the set of Shapley compositions can be analysed to better understand the contribution and influence of each feature's value on the prediction.

\subsection{Visualisation in an isometric-log-ratio space}

The Shapley compositions can be visualised in a $(D-1)$-dimensional Euclidean space isometric to the simplex with the ILR transformation presented in Section \ref{sec:ilr}. As we will see, this space is intuitive since it is a standard real vector space and it is additive. In what follows, we discuss some examples of Shapley composition-based explanations in an ILR space.

  \begin{figure}[!b]
    \centering
    \includegraphics[width=0.6\linewidth]{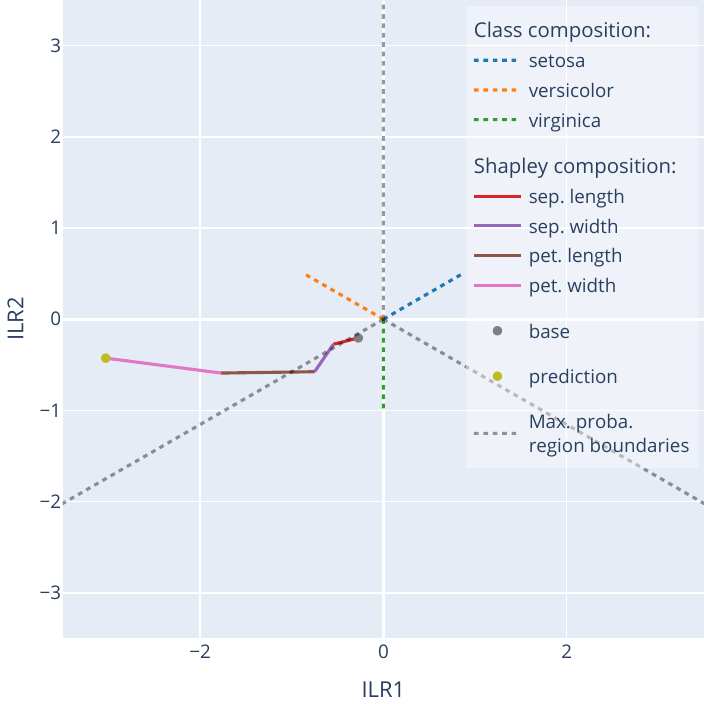}
    \caption{The sum of the Shapley compositions in an ILR space from the base distribution to the prediction for the classification of an Iris instance.}
    \label{fig:3classesshapsum}
    \end{figure}

%
%
\begin{figure}[!b]
    \centering
    \includegraphics[width=0.6\linewidth]{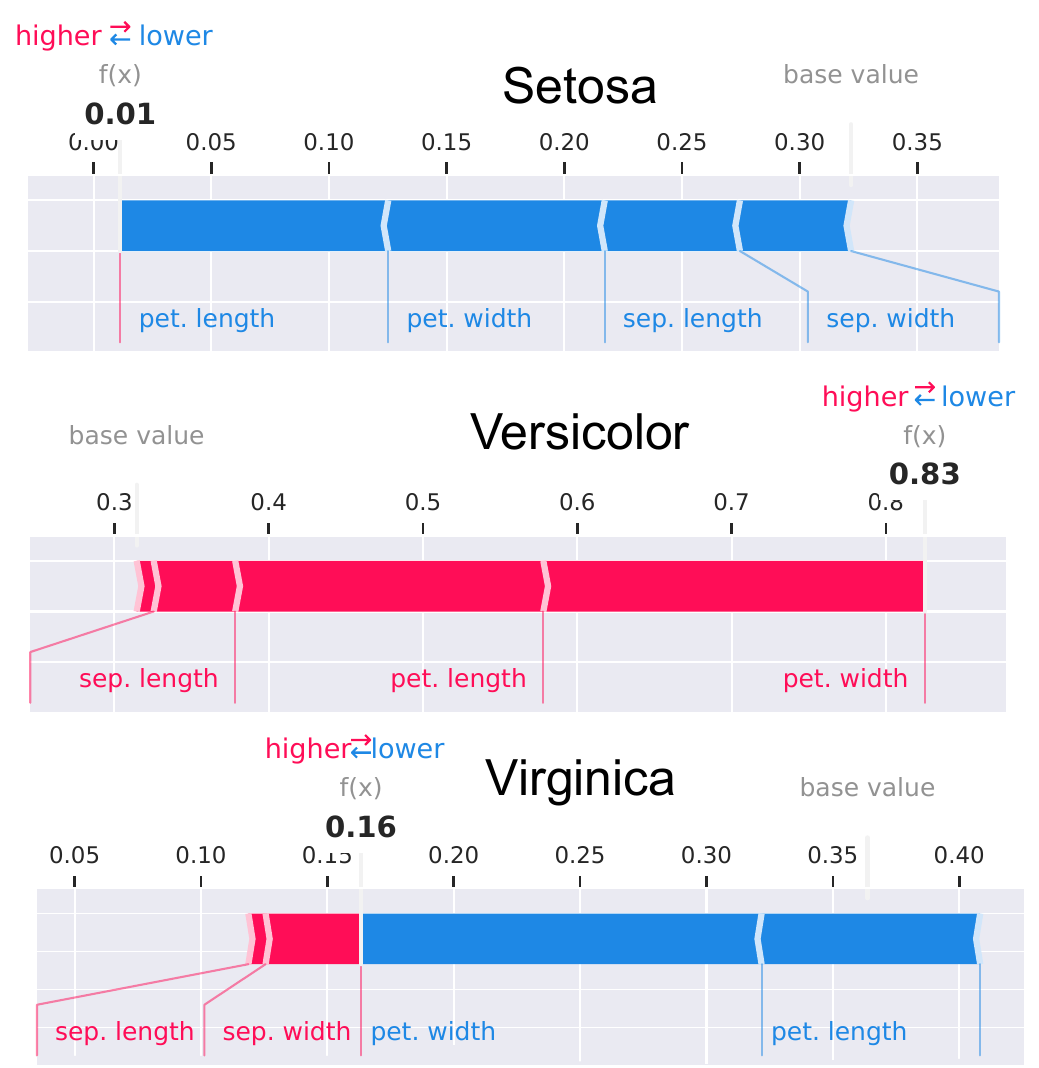}
     \caption{Visualisation of the Shapley values for each class in a one-vs-the-rest manner for the same instance as in Figure~\ref{fig:3classesshapsum}, obtained using the SHAP toolkit \cite{NIPS2017_7062}. The red/blue bars represent positive/negative contributions of each feature on the prediction.}
    \label{fig:3classes:ovr:shap}
\end{figure}

\subsubsection{Three classes}
\label{sec:3classes}

Our first illustration uses the well-known Iris classification dataset consisting of a set of flowers described by 4 features: sepal length and width, and petal length and width. The aim of the classification task is to predict to which of the three species (\emph{setosa}, \emph{versicolor} and \emph{virginica}) a flower belongs.

In the present example, a Support Vector Machine (SVM) with a radial basis function (rbf) kernel is used as a classifier. Pairwise coupling \cite{wu2003probability} is used to obtain a probabilistic prediction. Figure 
\ref{fig:3classesshapsum} shows the explanation of the classifier prediction for one \emph{versicolor} instance 
where the Shapley compositions 
move the distribution from the base to the prediction. Having the highest norm, the petal width and length are the features contributing the most to the prediction and move the base distribution into the \emph{versicolor} maximum probability region (maximum probability region boundaries are the dashed gray rays). Class-compositions are represented by coloured dashed vectors. A class-composition is defined as a unit norm composition going straight to the direction of one class and uniformly against all the others (see Appendix \ref{app:classcompo} for a formal definition).
Note that the class-compositions are not mutually orthogonal. This is because a positive contribution toward one class has necessarily to lead to a negative contribution toward at least another class to preserve the structure of the simplex.  

The Shapley composition for the petal length is almost orthogonal to the \emph{virginica} class-composition: for this instance, this feature does not contribute to the weight of the predicted probability for this class. Having a Shapley composition going straight to the opposite direction of one class-composition would suggest that the corresponding feature's value contributes to rejecting this class. This is somewhat the case for the sepal length. However, because its Shapley composition has a low norm, this feature contributes little to the prediction.

Alternatively, one could analyse this instance by applying the standard Shapley value in a one-vs-rest manner, explaining the feature contributions separately for each class. Figure \ref{fig:3classes:ovr:shap} shows how each explanation is usually visualised with the SHAP toolkit \cite{NIPS2017_7062}. The prediction is explained for each class one-by-one independently from one another, which makes it hard to appreciate the influence of one feature on the full distribution. Moreover, there is no guarantee that the intermediate full distribution remains on the simplex.
In contrast, with our approach, the influence of one feature's value on the full prediction can be analyse with a single quantity, the Shapley composition, in a single coherent and easily interpretable plot.

\subsubsection{Four classes}

In a four-class example, the simplex is $3$-dimensional. We illustrate this with a simple handwritten digit recognition task\footnote{We use the scikit-learn's digits dataset \cite{pedregosa2011scikit}.}. It consists of classifying an $8\times8$ image as representing one of the digits 0, 1, 2 or 3. Since there are 64 pixels, considering each pixel as a feature would correspond to 64 Shapley compositions. Moreover, the pixels will be highly correlated. Since our goal here is to provide simple illustrative examples, we reduce the number of features to 6 using a principal component analysis for better clarity and conciseness. An SVM with a rbf kernel and pairwise coupling is again used as a probabilistic classifier. A similar analysis as before can be applied here but within a $3$-dimensional plot as illustrated in Figure \ref{fig:4classesshapsum}. 

\begin{figure}[ht]
  \centering
  \includegraphics[width=0.6\linewidth]{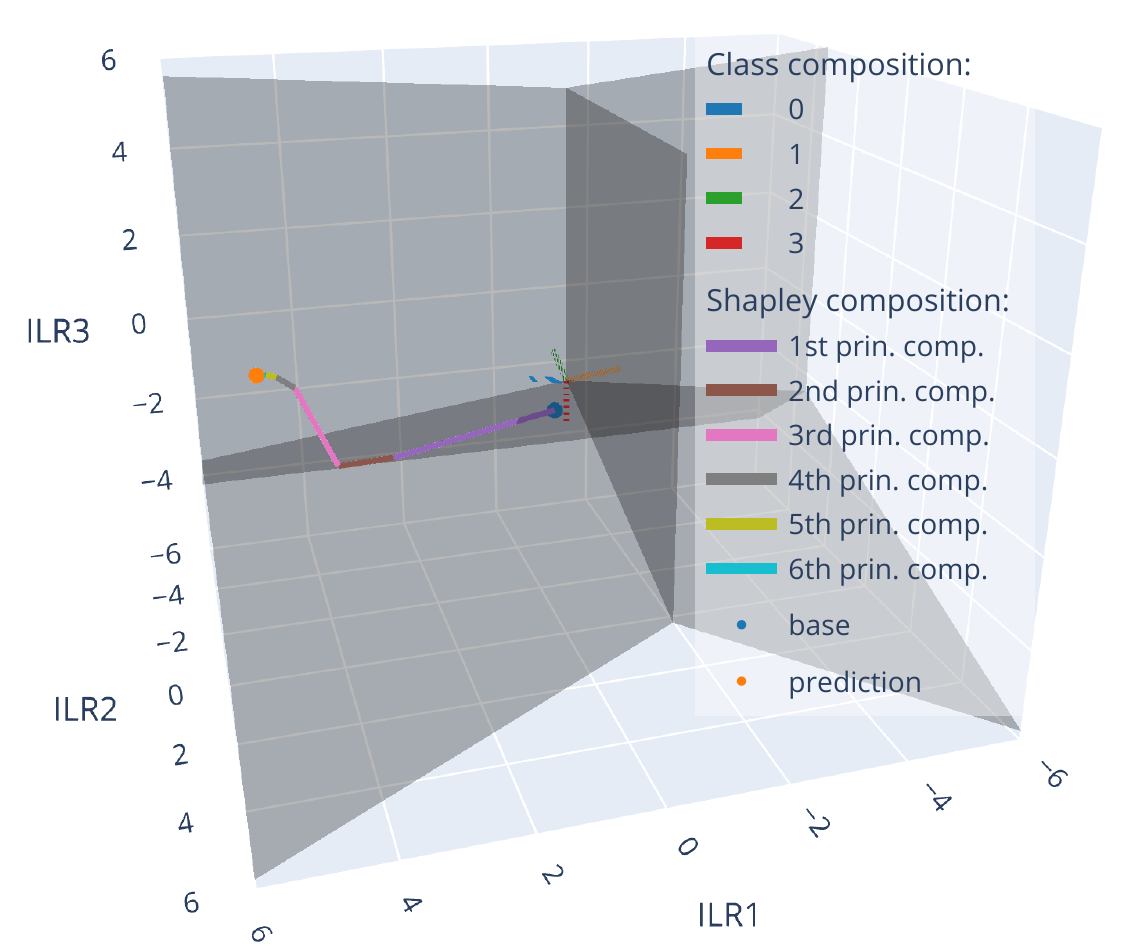}
  \caption{Shapley compositions in a $3$-dimensional ILR space for a four classes digit recognition task. The Shapley compositions are summed in the ILR space from the base distribution to the prediction. The gray transparent walls mark out the four maximum probability decision regions.}
  \label{fig:4classesshapsum}
\end{figure}

To better understand how this space is divided into four regions--each representing the maximum probability region for one class--one can think about the shape of a methane molecule. The hydrogens correspond to the vertices and the carbon to the center of a tetrahedron i.e.~a $3$-dimensional simplex. The relative positions of the class-compositions in the ILR space are the same as the bonds between the carbon and hydrogen: the angles are $\approx 109.5^{\circ}$. In this example, the tested instance is a 0\footnote{More examples and better visualisations can be obtained from the Jupyter notebooks: \url{https://github.com/orgs/shapley-composition}.}.

%

\subsection{More classes: groups of classes and balances}
\label{sec:balances}

When more than three classes are involved, the dimensions of the ILR space cannot be visualised all at once. However, $2$ or $3$-dimensional subspaces can still be visualised. In order to select the ILR components to investigate, one needs to understand what they refer to. In this section, we briefly discuss the interpretation of the ILR components.

A component of an ILR space can be interpreted as a \emph{balance}, i.e.~a log-ratio of two geometrical means of probabilities \cite{egozcue2003isometric,egozcue2005groups,pawlowskymodeling}: one giving the central values of the probabilities in one group of classes and one for another group of classes. Therefore, a balance is here comparing the weight of two groups of classes. The set of balances is built such that they are geometrically orthogonal meaning they provide non-redundant information\footnote{Not to be confused with statistical uncorrelation \cite{pawlowskymodeling}.}. This can be illustrated by a sequential binary partition or bifurcation tree. Two examples are given in Figures \ref{fig:bifurc1} and \ref{fig:bifurc2}. Figure \ref{fig:bifurc1} shows the bifurcation tree corresponding to the basis obtained with the Gram-Schmidt procedure as in \cite{egozcue2003isometric} which is the one used in the examples of Figures \ref{fig:3classesshapsum} and \ref{fig:4classesshapsum}  with respectively $D=3$ and $D=4$. Each node of the tree is a balance, i.e.,~an ILR component. The first balance $\tilde{p}_1$ compares the probability for class $1$ with the probability for class $2$. Each next balance then recursively compares the probability for the next class with the probabilities for the previous classes independently of all the others.

In some applications, one may be interested in particular comparisons of groups of classes not necessarily given by a basis in the form of Figure \ref{fig:bifurc1}. For instance, as in an example presented in \cite{egozcue2005groups}, if one wants to compare political parties or groups, it may be pertinent to have a balance comparing left and right-wing groups. But sometimes  there are no obvious relevant comparisons to study. In the handwritten digit recognition problem, one may want to compare odd with even numbers or primes with non-primes (although, being essentially a shape recognition problem, and the shape of the numbers being independent of their arithmetic properties, such comparisons may not be pertinent). 

  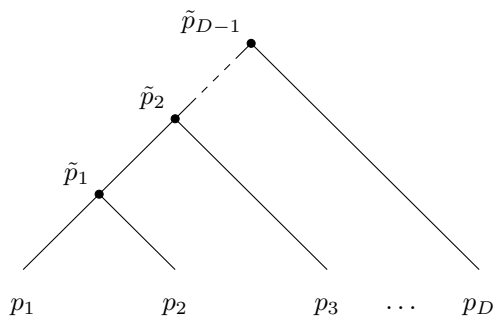
\begin{figure}[ht]
    \centering
    \begin{tikzpicture}[scale=2]
  \draw (0,0) -- (1.1,1.1);
  \draw[dashed] (1.1,1.1) -- (1.4,1.4);
  \draw (1.4,1.4) -- (1.5,1.5);
  \draw (0.5,0.5) -- (1,0);
  \draw (1,1) -- (2,0);
  \draw (1.5,1.5) -- (3,0);

  \filldraw[black] (0.5,0.5) circle (0.75pt) node[anchor=south east]{$\tilde{p}_{1}$};
  \filldraw[black] (1,1) circle (0.75pt) node[anchor=south east]{$\tilde{p}_{2}$};
  \filldraw[black] (1.5,1.5) circle (0.75pt) node[anchor=south east]{$\tilde{p}_{D-1}$};

  \draw[black] (0,-0.25) node{$p_1$};
  \draw[black] (1,-0.25) node{$p_2$};
  \draw[black] (2,-0.25) node{$p_3$};
  \draw[black] (2.5,-0.25) node{$\dots$};
  \draw[black] (3,-0.25) node{$p_{D}$};
\end{tikzpicture}

    \caption{Bifurcation tree corresponding to the basis obtained with the Gram-Schmidt procedure as in \cite{egozcue2003isometric} and used in the examples of Figures \ref{fig:3classesshapsum} and \ref{fig:4classesshapsum}.}
    \label{fig:bifurc1}
    \end{figure}
  \begin{figure}[ht]
      \centering
      \begin{tikzpicture}[scale=2]
  \draw (0,0) -- (2.25,2.25);
  \draw (2.25,2.25) -- (4.5,0);
  \draw (2,1.5) -- (0.5,0);
  \draw (1.25,0.25) -- (1,0);
  \draw (1,0.5) -- (1.5,0);
  \draw (3.25,0.25) -- (3,0);
  \draw (4.25,0.25) -- (4,0);
  \draw (1.75,1.75) -- (3.5,0);
  \draw (3,0.5) -- (2.5,0);
  \draw (2.75,0.75) -- (2,0);
  
  \filldraw[black] (1.75,1.75) circle (0.75pt) node[label={[anchor=south east]0.1mm:$\tilde{p}_{8}$}]{};
  \filldraw[black] (2.25,2.25) circle (0.75pt) node[label={[anchor=south east]0.1mm:$\tilde{p}_{9}$}]{};
  \filldraw[black] (3.25,0.25) circle (0.75pt) node[anchor=south west]{$\tilde{p}_{2}$};
  \filldraw[black] (4.25,0.25) circle (0.75pt) node[anchor=south west]{$\tilde{p}_{3}$};
  \filldraw[black] (1.25,0.25) circle (0.75pt) node[anchor=south west]{$\tilde{p}_{1}$};
  \filldraw[black] (1,0.5) circle (0.75pt) node[label={[anchor=south east]0.1mm:\small $\tilde{p}_{5}$}]{};
  
  \filldraw[black] (2,1.5) circle (0.75pt) node[anchor=south west]{$\tilde{p}_{7}$};
  \filldraw[black] (3,0.5) circle (0.75pt) node[anchor=south west]{$\tilde{p}_{4}$};
  \filldraw[black] (2.75,0.75) circle (0.75pt) node[anchor=south west]{$\tilde{p}_{6}$};
  
  \draw[black] (0,-0.25) node{$4$};
  \draw[black] (0.5,-0.25) node{$1$};
  \draw[black] (1,-0.25) node{$7$};
  \draw[black] (1.5,-0.25) node{$8$};
  \draw[black] (2,-0.25) node{$5$};
  \draw[black] (2.5,-0.25) node{$2$};
  \draw[black] (3,-0.25) node{$3$};
  \draw[black] (3.5,-0.25) node{$9$};
  \draw[black] (4,-0.25) node{$0$};
  \draw[black] (4.5,-0.25) node{$6$};
\end{tikzpicture}

      \caption{Bifurcation tree used in the $10$-class digit recognition task discussed in Section \ref{sec:balances} and in Figure \ref{fig:moreclasses35}.}
      \label{fig:bifurc2}
    \end{figure}
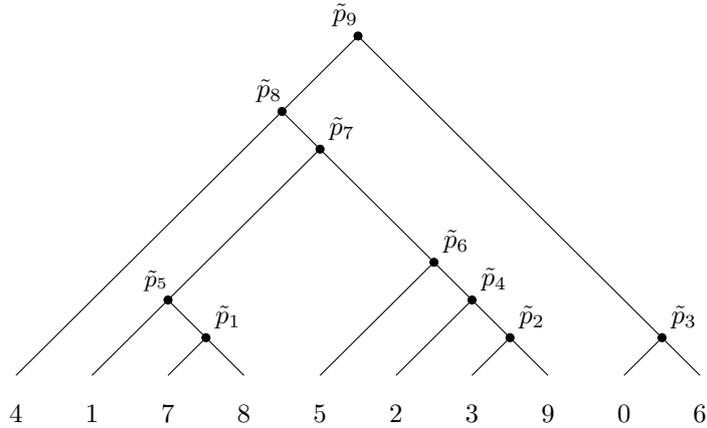

\begin{figure}[ht]
  \centering
  \includegraphics[width=0.7\linewidth]{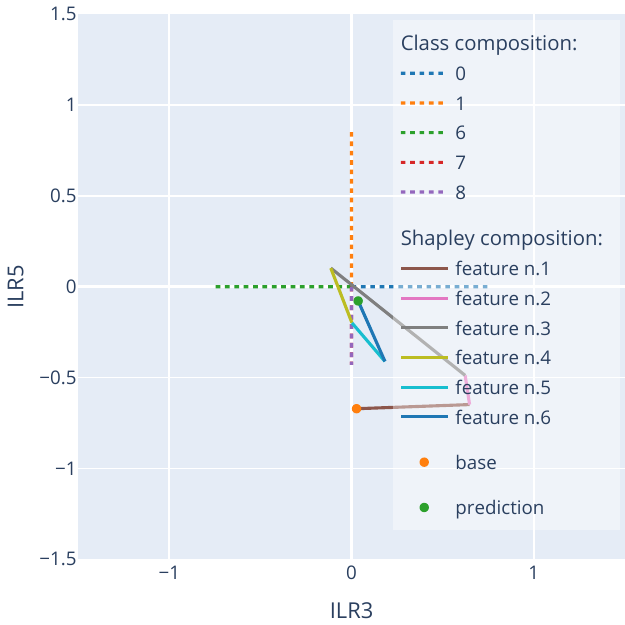}
  \caption{The sum of the Shapley compositions from the base to the prediction in the ILR subspace made of $\tilde{p}_3$ and $\tilde{p}_5$ for a test instance from class 2. $\tilde{p}_3$ compares the probability assigned for class 0 with the probability assigned for class 6 and $\tilde{p}_5$ compares the probability assigned for class 1 with the group of probabilities assigned for class 7 and 8. The color dashed vectors represent the class-compositions with non-zero projection.}
  \label{fig:moreclasses35}
\end{figure}

We use the basis of Figure \ref{fig:bifurc2} for a 10-class digit recognition task.
In this example, the bifurcation tree is obtained from the dendrogram of an agglomerative clustering of classes: for each class, the set of predictions is modelled by a logistic-normal \cite{aitchison1980}, with equal covariance, and classes are recursively merged with respect to the Mahalanobis distance.
Consider, in Figure \ref{fig:moreclasses35}, the third and fifth ILR dimensions ($\tilde{p}_3$ and $\tilde{p}_5$). 
Effectively, we are saying that we are interested in comparing the probability assigned for class 0 with the probability assigned for class 6, and in comparing the probability assigned for class 1 with the group of probabilities assigned for classes 7 and 8. $\tilde{p}_3$ depends only on the probability for the digits 0 and 6, and $\tilde{p}_5$ depends only on the probabilities for the digits 1, 7 and 8. The class-compositions for the other digits have a zero projection within this subspace and are therefore discarded in Figure \ref{fig:moreclasses35}. The class-compositions for 0 and 6 are orthogonal to the class-compositions for classes 1, 7 and 8. Indeed, the set of classes making the balance $\tilde{p}_3$ and the set of classes making $\tilde{p}_5$ have no intersection. 

In contrast, in the example of Figure \ref{fig:3classesshapsum}, $\tilde{p}_1$ is comparing the probabilities for the class \emph{setosa} with the probability for the class \emph{versicolor} and $\tilde{p}_2$ is comparing the probabilities for the class \emph{virginica} with the group of probabilities for \emph{setosa} and \emph{versicolor}. In Figure \ref{fig:3classesshapsum}, the class-compositions are exhaustively present and are therefore geometrically dependent and none of them are orthogonal. 
In Figure \ref{fig:moreclasses35}, the classes are not all represented such that the class-compositions projections can be orthogonal. In other words, since we look at only a subspace of an ILR space, we are not looking at the full probability distribution. 

In the example of Figure \ref{fig:moreclasses35}, since $\tilde{p}_5$ is comparing 1 with the group of digits 7 and 8, the projection on this line of the class-compositions for 1 goes in an opposite direction than the one for the class-compositions for 7 and 8. The latter two are equal and half as long as the former. In this way, $\tilde{p}_5$ compares the probability for 1 with the group of probabilities for 7 and 8 with the same weight. In other words, in this subspace, the class-compositions for 7 and 8 are reweighted such that this group of two classes has the same weight as the group made of the single class 1.

Within this space, Shapley compositions can be explored as in the examples of Figures 
\ref{fig:3classesshapsum} and \ref{fig:4classesshapsum}, keeping in mind that this is a subspace of a full ILR space. 

\subsection{Angles, norms and projections}

An explanation can be summarised by sets of angles, norms and projections:
\begin{itemize}
\item The norm of a Shapley composition gives the strength of the contribution of the feature's value to the prediction. This measures the overall contribution of the feature, regardless of its direction.
\item The angle between two Shapley compositions informs about their orthogonality. Orthogonality suggests that the features are non-redundant for the given instance. A negative angle would suggest that the features have an opposite influence on the prediction.
\item The projections of a Shapley composition on the set of class-compositions inform in favour of, or against, which classes a feature's value is contributing.
\end{itemize}
To give a few examples, for the Iris example of Figure \ref{fig:3classesshapsum}, the norms for each Shapley composition are $\approx$ $1.27$, $1.02$, $0.36$ and $0.28$ respectively for the petal width, length and sepal width and length, confirming the features' importance one would expect from Figure \ref{fig:3classesshapsum}. The projection of the petal length's composition on the \emph{virginica} class-composition is $\approx 0.01$ confirming the low influence of this feature on the probability for this class. Finally, note that the cosine similarity between the Shapley compositions for petal length and width is close to one ($\approx 0.99$) which confirms these features are moving the distribution toward the same direction while the compositions for sepal and petal width have a cosine similarity of 0.45 confirming they point to complementary directions. 

\subsection{Histograms and parallel coordinates}

For a classification problem with at most 4 classes, an ILR space can be fully visualised within a single figure. However, for more classes
we cannot visualise the full ILR space and therefore have to explore subspaces. 
In this section we discuss alternative visualisations. 

The Shapley composition can be visualised using a bar plot like discrete probability distributions. Figure \ref{fig:histiris} shows the Shapley compositions of the Iris classification example as histograms. Note that in Figure \ref{fig:intro}, the histograms were showing the probability distributions as the successive perturbation of the base by the features' contribution. The histograms in this section refer to the visualisation of Shapley compositions for each feature separately. A more uniform histogram reflects less contribution of the feature's value to the change of the probability distribution (e.g. the sepal length in Figure \ref{fig:histiris}). In contrast, the Shapley compositions for the petal length and width have a high value for the \emph{versicolor} class, in comparison to the others. This confirms the contribution of these features toward this class.

\begin{figure}[ht]
  \centering
  \includegraphics[width=0.6\linewidth]{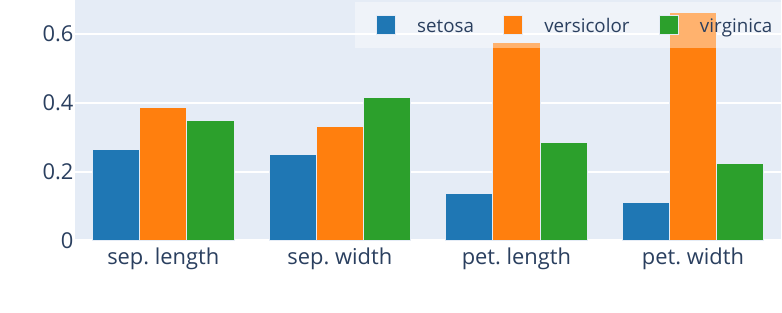}
  \caption{Shapley compositions visualised as histograms for the Iris classification example.}
  \label{fig:histiris}
\end{figure}
As another illustration, Figure \ref{fig:histmore} shows the Shapley compositions of the $10$-class digit recognition example. Here, and contrary to the visualisation of the compositions in an ILR space as in Section \ref{sec:balances}, one can analyse all parts of each composition within a single plot. In this example, the high value for digit 2 for the first principal component confirms the contribution of this feature toward this class.
\begin{figure}[ht]
  \centering
  \includegraphics[width=0.6\linewidth]{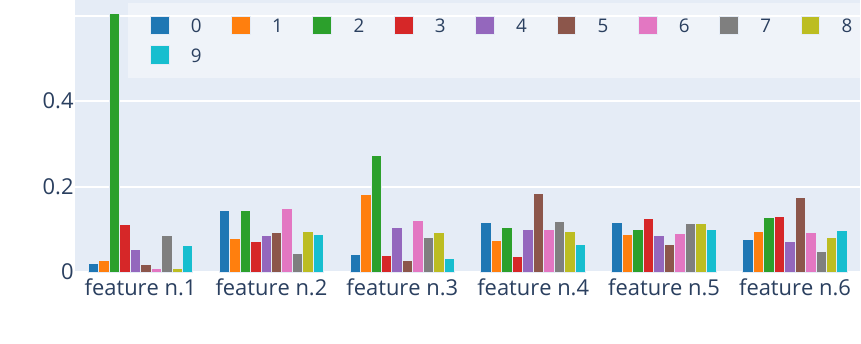}
  \vspace{-0.5cm}
  \caption{Shapley compositions visualised as histograms for the ten classes digit recognition example.}
  \label{fig:histmore}
\end{figure}

Another way to visualise the full compositions is with parallel coordinates. After sorting the features by their contribution (i.e.~the norm of their Shapley composition), the successive perturbation of the distribution can be visualised as probability lines from the base distribution to the prediction.
Figure \ref{fig:more:par} shows such a plot for the digit recognition example. With this visualisation, we can compactly see how the probability distribution is transformed by each feature contribution from the base distribution to the predicted one. In this example, the probability for digit 2 increases the most with the contribution of feature 1. This feature does not contribute in the change of the probability for digit 3 as suggested by the horizontal red segment. The next feature continues to increase the probability for digit 2 while decreasing the others.
\begin{figure}[ht]
  \centering
  \includegraphics[width=0.65\linewidth]{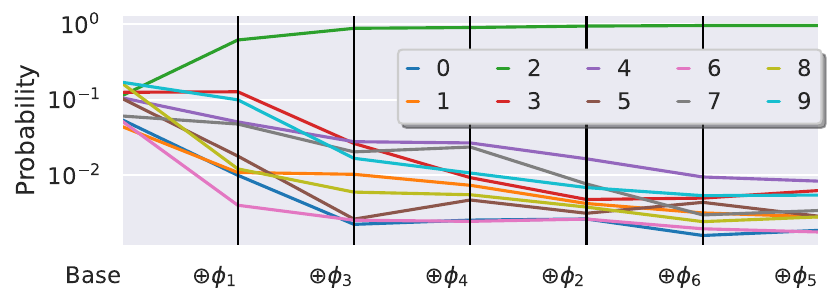}
  \caption{Parallel coordinates visualisation of the successive perturbation of the base distribution by Shapley compositions (ordered by importance, i.e., norm). The final distribution on the right side is the prediction.}
  \label{fig:more:par}
\end{figure}

\subsection{The estimation algorithm}

The estimation algorithm used in this work for computing the Shapley compositions is an adaptation of Algorithm 2 in \cite{vstrumbelj2014explaining}. Since the resulting Shapley compositions are approximations, the efficiency property does not necessarily hold. In order for the set of estimated Shapley compositions to respect the efficiency property, each Shapley composition is adjusted following a similar method as in the sampling approximation in the SHAP toolkit \cite{NIPS2017_7062}\footnote{\url{https://github.com/shap/shap/blob/master/shap/explainers/_sampling.py}}. We refer the reader to Appendix \ref{app:algo} and Appendix \ref{app:adjust} for more details.

Note that the Shapley composition framework can be applied on many different types of data, such as images. However, the main limitation is algorithmic: the complexity of the algorithm increases with the number of features, as with the standard Shapley value framework. Moreover, the estimation algorithm assumes that the features are independent. Estimating Shapley values without such assumption has been discussed in the literature \cite{AAS2021103502}. We leave the exploration of estimation algorithms of the Shapley compositions without the features-independence assumption and for data with a large number of features for future work.

\newpage
\section{Discussion and conclusion}
\label{sec:conclud}

The use of standard Shapley values for explaining multiclass machine learning models has been rarely discussed in the literature. However, the computation of the Shapley values on each output dimension one-by-one can be encountered. To be more precise, for an $D$-class problem ($D>2$), it may first sound natural to compute a Shapley value on the logit of the probability for each class resulting in a $D$-dimensional vector of the Shapley values. Even if the efficiency property holds with the standard addition, i.e.~the sum of the element-wise logit of the base distribution with such vectors for each feature is equal to the element-wise logit of the prediction, the path from the base to the prediction may go out of the simplex, i.e., the space of probability distributions, which is counter-intuitive and indeed incoherent. 
Moreover, such a strategy would require running $D$ independent explanations contrary to the Shapley composition approach which requires a single explanation process.

As far as we are aware, this paper is the first to propose an extension of the Shapley value framework to the multidimensional simplex for explaining a multiclass probabilistic prediction in machine learning. We saw how the formalisation of the standard Shapley value naturally extends to the simplex using the Aitchison geometry. The resulting Shapley quantity is a composition (distribution), i.e.~a vector living on the probability simplex. It is referred as \emph{Shapley composition} and explicates the contribution of a feature's value to a prediction. To be more precise, it tells how a feature's value moves the distribution from the base one to the predicted one on the simplex. We saw that the Shapley composition is the unique quantity that satisfies the linearity, symmetry and efficiency on the Aitchison simplex. 

The Aitchison geometry gives to the simplex an Euclidean vector space structure. For explaining a prediction, Shapley compositions can be visualised and analysed through angles, norms and projections. They inform on both the strength and the direction of each feature's value effect. Living on the probability simplex, i.e.~the same space as discrete probability distributions, the Shapley compositions can also be visualised as histograms. Parallel plots of probabilities can also be visualised to keep track of the change in the distribution induced by each feature's value.

The literature about the use of Shapley values in machine learning is extensive. Many estimation algorithms have been developed, many applications of the Shapley value have emerged, and large-scale experiments have been conducted. In contrast, our paper presents limited experimental results as simple proofs of concept and illustrations. However, the main contribution of this work is theoretical and methodological. We believe this work lays proper foundations to foster the research in explainable machine learning, especially for multidimensional and multiclass predictions.

\newpage
\begin{appendices}
\section{Proof of the uniqueness of Shapley compositions on the simplex}
\label{app:proof}

This section provides a proof of Theorem \ref{theo:shap}.

\subsection{Linearity, symmetry and efficiency}
Let's first show that the Shapley composition statisfies the \emph{linearity}, \emph{symmetry} and \emph{efficiency}.
\begin{proof}~
\paragraph{Linearity:}
  Let's consider the linear combination of predictions, or models, $\bm{h}(\bm{x}) = \alpha \odot \bm{f}(\bm{x}) \oplus \beta \odot \bm{g}(\bm{x})$.
  \begin{equation}
    \begin{aligned}
      \mathbb{E}^{\mathcal{A}}_{\text{Pr}}[\bm{h}(\bm{x})\mid \bm{x}_S] &= \ilr^{-1} \left( \mathbb{E}_{\text{Pr}} [\ilr \left(  \alpha \odot \bm{f}(\bm{x}) \oplus \beta \odot \bm{g}(\bm{x}) \right) \mid \bm{x}_S] \right),\\
                                                                        &= \ilr^{-1} \left( \mathbb{E}_{\text{Pr}} [\alpha \ilr \left( \bm{f}(\bm{x})\right) + \beta \ilr \left( \bm{g}(\bm{x}) \right) \mid \bm{x}_S] \right),\\
                                                                        &= \ilr^{-1} \left( \alpha \mathbb{E}_{\text{Pr}} [ \ilr \left( \bm{f}(\bm{x})\right) \mid \bm{x}_S] + \beta \mathbb{E}_{\text{Pr}} [ \ilr \left( \bm{g}(\bm{x}) \right) \mid \bm{x}_S] \right),\\
                                                                        &= \alpha \odot \ilr^{-1} \left( \mathbb{E}_{\text{Pr}} [ \ilr \left( \bm{f}(\bm{x})\right) \mid \bm{x}_S] \right) \oplus \beta \odot \ilr^{-1} \left( \mathbb{E}_{\text{Pr}} [ \ilr \left( \bm{g}(\bm{x}) \right) \mid \bm{x}_S] \right),\\
                                                                        &= \alpha \odot \mathbb{E}^{\mathcal{A}}_{\text{Pr}}[\bm{f}(\bm{x})\mid \bm{x}_S] \oplus \beta \odot \mathbb{E}^{\mathcal{A}}_{\text{Pr}}[\bm{g}(\bm{x})\mid \bm{x}_S].
    \end{aligned}
  \end{equation}
  Similarly, $\mathbb{E}^{\mathcal{A}}_{\text{Pr}}[\bm{h}(\bm{x})] = \alpha \odot \mathbb{E}^{\mathcal{A}}_{\text{Pr}}[\bm{f}(\bm{x})] \oplus \beta \odot \mathbb{E}^{\mathcal{A}}_{\text{Pr}}[\bm{g}(\bm{x})]$.\\

  Therefore, $\bm{v}_{\bm{h},\bm{x},\text{Pr}}(S) = \alpha \odot \bm{v}_{\bm{f},\bm{x},\text{Pr}}(S) \oplus \beta \odot \bm{v}_{\bm{g},\bm{x},\text{Pr}}(S)$, meaning that $\bm{v}$ is linear with respect to the model. The linearity of the contribution $\bm{c}$ naturally follows:
  \begin{equation}
    \begin{aligned}
      \forall i \in \mathcal{I},~\forall S\subseteq \mathcal{I}\backslash i,\\
      \bm{c}_{\bm{h},\bm{x},\text{Pr}}(i,S) &= \bm{v}_{\bm{h},\bm{x},\text{Pr}}({S\cup\{i\}}) \ominus \bm{v}_{\bm{h},\bm{x},\text{Pr}}(S),\\
      &= \left( \alpha \odot \bm{v}_{\bm{f},\bm{x},\text{Pr}}({S\cup\{i\}}) \oplus \beta \odot \bm{v}_{\bm{g},\bm{x},\text{Pr}}({S\cup\{i\}}) \right) \ominus \left( \alpha \odot \bm{v}_{\bm{f},\bm{x},\text{Pr}}(S) \oplus \beta \odot \bm{v}_{\bm{g},\bm{x},\text{Pr}}(S) \right),\\
      &= \alpha \odot \bm{v}_{\bm{f},\bm{x},\text{Pr}}({S\cup\{i\}}) \oplus \beta \odot \bm{v}_{\bm{g},\bm{x},\text{Pr}}({S\cup\{i\}}) \ominus\alpha \odot \bm{v}_{\bm{f},\bm{x},\text{Pr}}(S) \ominus \beta \odot \bm{v}_{\bm{g},\bm{x},\text{Pr}}(S),\\
      &= \alpha \odot \left( \bm{v}_{\bm{f},\bm{x},\text{Pr}}({S\cup\{i\}}) \ominus \bm{v}_{\bm{f},\bm{x},\text{Pr}}(S) \right) \oplus \beta \odot \left( \bm{v}_{\bm{g},\bm{x},\text{Pr}}({S\cup\{i\}}) \ominus \bm{v}_{\bm{g},\bm{x},\text{Pr}}(S)\right),\\
      &= \alpha \odot \bm{c}_{\bm{f},\bm{x},\text{Pr}}(i,S) \oplus \beta \odot \bm{c}_{\bm{g},\bm{x},\text{Pr}}(i,S).
    \end{aligned}
  \end{equation}
  And the linearity of the Shapley composition:
  \begin{equation}
    \begin{aligned}
    \forall i \in \mathcal{I},~~
      \bm{\phi}_i\left(\bm{h}\right) &= \frac{1}{d!}  \underset{\pi}{\bigoplus}\bm{c}_{\bm{h},\bm{x},\text{Pr}}(i,\pi^{<i}),\\
                            &= \frac{1}{d!}  \underset{\pi}{\bigoplus}\left( \alpha \odot \bm{c}_{\bm{f},\bm{x},\text{Pr}}(i,S) \oplus \beta \odot \bm{c}_{\bm{g},\bm{x},\text{Pr}}(i,S) \right),\\
                            &= \alpha \odot \left( \frac{1}{d!}  \underset{\pi}{\bigoplus} \bm{c}_{\bm{f},\bm{x},\text{Pr}}(i,S) \right) \oplus \beta \odot \left( \frac{1}{d!} \underset{\pi}{\bigoplus}    \bm{c}_{\bm{g},\bm{x},\text{Pr}}(i,S) \right),\\
                            &= \alpha \odot \bm{\phi}_i\left(\bm{f}\right) \oplus \beta\odot \bm{\phi}_i\left(\bm{g}\right).
    \end{aligned}
  \end{equation}
Thus, the Shapley composition is linear on the Aitchison simplex as a function of the prediction.

\paragraph{Symmetry} is straightforward.

\paragraph{Efficiency}
  \begin{equation}
    \begin{aligned}
      \underset{i=1}{\overset{d}\bigoplus} \bm{\phi}_i\left(\bm{f}\right) & = \underset{i=1}{\overset{d}\bigoplus}\left( \frac{1}{d!} \odot \underset{\pi}{\bigoplus} \bm{c}(i,\pi^{<i})\right),\\
                                                                 &=  \frac{1}{d!} \odot \underset{i=1}{\overset{d}\bigoplus}\left( \underset{\pi}{\bigoplus} \left( \bm{v}(\pi^{<i+1}) \ominus \bm{v}(\pi^{<i}) \right) \right),\\
                                                                 &=  \frac{1}{d!} \odot \underset{i=1}{\overset{d}\bigoplus}\left( \underbrace{\left(\underset{\pi}{\bigoplus} \bm{v}(\pi^{<i+1}) \right)}_{\bm{A}_{i+1}} \ominus \underbrace{\left( \underset{\pi}{\bigoplus} \bm{v}(\pi^{<i}) \right)}_{\bm{A}_{i}} \right),\\
                                                                 &=  \frac{1}{d!} \odot \underset{i=1}{\overset{d}\bigoplus}\left( \bm{A}_{i+1} \ominus \bm{A}_{i} \right),\\
                                                                 &=  \frac{1}{d!} \odot \left( \bm{A}_{d+1} \ominus \bm{A}_{1} \right), \text{~since we have a telescoping perturbation,}\\
                                                                 &=  \frac{1}{d!} \odot \left( \left( \underset{\pi}{\bigoplus} \bm{v}(\pi^{<d+1}) \right) \ominus \left( \underset{\pi}{\bigoplus} \bm{v}(\pi^{<1}) \right) \right),\\
                                                                 &=  \frac{1}{d!} \odot \left( \left( \underset{\pi}{\bigoplus} \bm{v}\left( \mathcal{I} \right) \right) \ominus \left( \underset{\pi}{\bigoplus} \bm{v}\left(\emptyset\right) \right) \right),\\
                                                                 &= \bm{v}\left( \mathcal{I} \right) \ominus \bm{v}\left(\emptyset \right), \text{~since $d!$ is the number of permutation,}\\
                                                                 &= \bm{f}(\bm{x}) \ominus \mathbb{E}^{\mathcal{A}}_{\text{Pr}}[\bm{f}(\bm{X})].
    \end{aligned}
  \end{equation}
Thus, the Shapley composition is \emph{linear}, \emph{symmetric} and \emph{efficient}.
\end{proof}
\subsection{Uniqueness}
Let's now show the uniqueness of the quantity satisfying the above three axiomatic properties on the simplex. The proof is highly inspired by the proofs for the standard Shapley value in game theory \cite{shapley1953value,myerson1997game}.
We first need the following Definition and Lemma.

Let's first consider the set $\mathcal{G}_d^{\mathcal{S}^D}$ of composition-valued set functions. This can be seen, in cooperative game theory, as the set of characteristic functions of games where the payoff, or worth, is a composition rather than a scalar. The function from Equation \ref{eq:valuefunctionsimplex} is such function.

For simplicity of the notation, let's consider one isometric-log-ratio space $\mathbb{R}^{D-1}$ isomorphic to the simplex $\mathcal{S}^D$. We therefore consider instead the set $\mathcal{G}_d^{\mathbb{R}^{D-1}}$ of vector-valued set functions\footnote{The tilde refers to the isometric-log-ratio transformation of a composition.}:
  \begin{equation}
    \mathcal{G}_d^{\mathbb{R}^{D-1}} = \{\tilde{\bm{v}}: 2^{\mathcal{I}}\to \mathbb{R}^{D-1} \mid \tilde{\bm{v}}(\emptyset) = \bm{0} \},  
  \end{equation}
  where $\mathcal{I}= \{1, 2, \dots d \}$. $\mathcal{G}_d^{\mathbb{R}^{D-1}}$ is $(2^d -1 )(D-1)$-dimensional, since $| 2^\mathcal{I} | = 2^d $, and the $-1$ corresponds to the empty set (with the constraint $\tilde{\bm{v}}(\emptyset) = \bm{0}$), and $D-1$ is the dimension of the functions' codomain.

  \begin{definition*} 
  Let $\{ \tilde{\bm{e}}^{(i)}\}_{1\leq i \leq D-1}$ be a linear basis of $\mathbb{R}^{D-1}$. The $i$th-vector-unanimity game $(\mathcal{I}, \tilde{\bm{v}}_T^{(i)})$, where $T\subseteq \mathcal{I} \backslash \emptyset$, is defined such that:
  \begin{equation}
    \label{eq:2}
    \forall S \subseteq \mathcal{I},~~~\tilde{\bm{v}}_T^{(i)}(S) = \left\lbrace
      \begin{array}{@{}ll@{}}
        \tilde{\bm{e}}^{(i)}, & \text{if}\ T\subseteq S \\
        \bm{0}, & \text{otherwise.}
      \end{array}\right.
  \end{equation}
  \end{definition*}

  \begin{lemma*} The set $\{ \tilde{\bm{v}}_T^{(i)}\in \mathcal{G}_d^{\mathbb{R}^{D-1}} \mid T\subseteq \mathcal{I} \backslash \emptyset , 1 \leq i \leq D-1 \}$ forms a linear basis of $\mathcal{G}_d^{\mathbb{R}^{D-1}}$.
  \end{lemma*}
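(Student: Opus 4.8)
The plan is to exploit the fact that the proposed family has exactly the right cardinality to be a basis, so that establishing either spanning or linear independence alone suffices. The number of generators $\tilde{\bm{v}}_T^{(i)}$ equals $(2^d-1)(D-1)$: there are $2^d-1$ admissible index sets $T$ (the nonempty subsets of $\mathcal{I}$) and $D-1$ choices of the codomain basis vector $\tilde{\bm{e}}^{(i)}$. This matches the stated dimension of $\mathcal{G}_d^{\mathbb{R}^{D-1}}$, so I only need to prove one of the two linear-algebra properties and invoke the count.

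The main approach is to reduce the vector-valued statement to the classical scalar one. Since $\{\tilde{\bm{e}}^{(i)}\}_{1\le i\le D-1}$ is a basis of $\mathbb{R}^{D-1}$, every $\tilde{\bm{v}}\in\mathcal{G}_d^{\mathbb{R}^{D-1}}$ expands uniquely as $\tilde{\bm{v}}(S)=\sum_{i=1}^{D-1} v_i(S)\,\tilde{\bm{e}}^{(i)}$, where each coordinate $v_i:2^{\mathcal{I}}\to\mathbb{R}$ is a scalar set function with $v_i(\emptyset)=0$. The well-known result from cooperative game theory --- the one the uniqueness proof ultimately relies on --- states that the scalar unanimity games $v_T$, defined by $v_T(S)=1$ if $T\subseteq S$ and $0$ otherwise, form a basis of the space of scalar set functions vanishing on $\emptyset$, with explicit coefficients $c_{T}=\sum_{R\subseteq T}(-1)^{|T|-|R|}v(R)$ given by M\"obius inversion. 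Applying this coordinatewise writes each $v_i=\sum_{T\ne\emptyset}c_{T,i}\,v_T$, and substituting back gives $\tilde{\bm{v}}=\sum_{T,i}c_{T,i}\,\tilde{\bm{v}}_T^{(i)}$, since $\tilde{\bm{v}}_T^{(i)}(S)=v_T(S)\,\tilde{\bm{e}}^{(i)}$. This proves spanning, and with the dimension count the family is a basis.

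As a self-contained alternative I would instead verify independence directly: set $\sum_{T\ne\emptyset}\sum_i c_{T,i}\tilde{\bm{v}}_T^{(i)}=\bm{0}$, collect $\bm{c}_T:=\sum_i c_{T,i}\tilde{\bm{e}}^{(i)}\in\mathbb{R}^{D-1}$, and evaluate at sets $S$ of increasing size. Since $\tilde{\bm{v}}_T^{(i)}(S)=\tilde{\bm{e}}^{(i)}$ exactly when $T\subseteq S$, the relation reads $\sum_{T\subseteq S}\bm{c}_T=\bm{0}$ for every $S$; a short induction on $|S|$ (starting from the singletons, where only $T=S$ contributes) forces $\bm{c}_S=\bm{0}$ for all $S$, and then linear independence of $\{\tilde{\bm{e}}^{(i)}\}$ forces every $c_{T,i}=0$.

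The argument involves no deep obstacle; the only point needing care is the bookkeeping that keeps the codomain structure and the set-function structure decoupled. Specifically, I must ensure the expansion over $T$ and the expansion over $i$ do not interfere, which is exactly what lets the scalar result be applied independently in each coordinate $v_i$, and that the coefficients recovered this way are simultaneously the unique ones --- this follows from combining uniqueness of the scalar M\"obius coefficients with uniqueness of the expansion in the basis $\{\tilde{\bm{e}}^{(i)}\}$. If I preferred the proof to be fully self-contained rather than citing the scalar game-theory fact, the main work would shift to establishing that scalar unanimity-game basis, i.e.\ verifying the M\"obius-inversion identity, which is the genuinely computational part.
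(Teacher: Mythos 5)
Your proposal is correct, and its two routes relate to the paper's proof in different ways. The paper proves linear independence directly, by contradiction: it assumes a nontrivial dependence $\sum_{T,i}\alpha_T^{(i)}\tilde{\bm{v}}_T^{(i)}=\bm{0}$, picks a minimal $T_0$ with some nonzero coefficient, and evaluates the relation at $S=T_0$ to get $\sum_i\alpha_{T_0}^{(i)}\tilde{\bm{e}}^{(i)}\neq\bm{0}$, a contradiction; combined with the dimension count this gives the basis. Your ``self-contained alternative'' (induction on $|S|$ starting from singletons) is essentially this same argument in positive form, so that route matches the paper. Your main route is genuinely different: you decompose each $\tilde{\bm{v}}$ coordinatewise in the codomain basis $\{\tilde{\bm{e}}^{(i)}\}$, observe that each coordinate $v_i$ is a scalar set function vanishing on $\emptyset$, and import the classical fact that scalar unanimity games span that space with explicit M\"obius coefficients, thereby proving \emph{spanning} rather than independence before invoking the count. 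What each buys: your reduction makes explicit that the set-function structure and the codomain structure decouple, and it yields closed-form coefficients $c_{T,i}=\sum_{R\subseteq T}(-1)^{|T|-|R|}v_i(R)$ for expanding any concrete characteristic function, at the cost of citing (or separately proving) the scalar unanimity-game basis theorem; the paper's minimal-set argument is shorter and fully self-contained, but gives no explicit coordinates. Both arguments are sound, and your observation that the cardinality match means only one of spanning/independence needs to be checked is exactly the device the paper also uses.
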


  \begin{proof}
    There are $(2^d -1 ) \times (D-1)$ $i$th-vector-unanimity games, the same number as the dimensionality of $\mathcal{G}_d^{\mathbb{R}^{D-1}}$. We therefore just need to prove that they are linearly independent towards a contradiction.

    Let's assume that:

    \begin{equation}
      \sum _{\substack{T \subseteq \mathcal{I} \backslash \emptyset \\ 1 \leq i \leq D-1}} \alpha_T^{(i)} \tilde{\bm{v}}_T^{(i)} = \bm{0},~\text{and}~\exists~\alpha_T^{(i)} \neq 0.
    \end{equation}
    Let $T_0$ be a set of minimal size in $\{ T\subseteq \mathcal{I} \backslash \emptyset \mid \exists~\alpha_T^{(i)} \neq 0 \}$.

    Then,
    \begin{equation}
      \sum _{\substack{T \subseteq \mathcal{I} \backslash \emptyset \\ 1 \leq i \leq D-1}} \alpha_T^{(i)} \tilde{\bm{v}}_T^{(i)}(T_0) = \sum _{\substack{T \subseteq T_0 \\ T \neq \emptyset \\ 1 \leq i \leq D-1}} \alpha_T^{(i)} \tilde{\bm{v}}_T^{(i)}(T_0),
    \end{equation}
    because, by definition of the $i$th-vector-unanimity game, for all $T \subseteq \mathcal{I} \backslash \emptyset$ not in $T_0$, $\tilde{\bm{v}}_T^{(i)}(T_0) = \bm{0}$.

    And because, $T_0$ is the minimal set in $\{ T\subseteq \mathcal{I} \backslash \emptyset \}$ such that $\exists~\alpha_{T_0}^{(i)} \neq 0 $, $\forall T\subset T_0$ and $1 \leq i \leq D-1$, $\alpha_T^{(i)} = 0$.

    Then,
    \begin{equation}
      \begin{aligned}
        \sum _{\substack{T \subseteq \mathcal{I} \backslash \emptyset \\ 1 \leq i \leq D-1}} \alpha_T^{(i)} \tilde{\bm{v}}_T^{(i)}(T_0) &= \sum _{1 \leq i \leq D-1} \alpha_{T_0}^{(i)} \tilde{\bm{v}}_{T_0}^{(i)}(T_0),\\
                                                                                                                                        &=\sum _{1 \leq i \leq D-1} \alpha_{T_0}^{(i)} \tilde{\bm{e}}^{(i)} \neq \bm{0},
      \end{aligned}
    \end{equation}
    because $\exists~\alpha_{T_0}^{(i)} \neq 0$ and $\{ \tilde{\bm{e}}^{(i)}\}_{1\leq i \leq D-1}$ are linearly independent.

    This is indeed a contradiction. Therefore, the elements of $\{ \tilde{\bm{v}}_T^{(i)}\in \mathcal{G}_d^{\mathbb{R}^{D-1}} \mid T\subseteq \mathcal{I} \backslash \emptyset , 1 \leq i \leq D-1 \}$ are linearly independent and form a linear basis of $\mathcal{G}_d^{\mathbb{R}^{D-1}}$.
  \end{proof}

\begin{corollary*}
The set $\{ {\bm{v}}_T^{(i)} = \ilr^{-1} \left(\tilde{\bm{v}}_T^{(i)} \right)\in \mathcal{G}_d^{\mathcal{S}^{D}} \mid T\subseteq \mathcal{I} \backslash \emptyset , 1 \leq i \leq D-1 \}$ forms an Aitchison linear basis of $\mathcal{G}_d^{\mathcal{S}^{D}}$,
\end{corollary*}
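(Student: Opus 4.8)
The plan is to leverage the fact, recalled in Section \ref{sec:ilr}, that the isometric log-ratio transformation is a linear isomorphism between $\mathcal{S}^D$ equipped with the Aitchison operations and $\mathbb{R}^{D-1}$ equipped with the standard vector-space operations, and to lift this isomorphism to the level of set functions. First I would make explicit that $\mathcal{G}_d^{\mathcal{S}^D}$ is itself an Aitchison vector space when the perturbation $\oplus$ and powering $\odot$ are applied pointwise: for $\bm{v},\bm{w}\in\mathcal{G}_d^{\mathcal{S}^D}$ and $\alpha,\beta\in\mathbb{R}$ one sets $(\alpha\odot\bm{v}\oplus\beta\odot\bm{w})(S)=\alpha\odot\bm{v}(S)\oplus\beta\odot\bm{w}(S)$, with neutral element the set function constant equal to the barycenter $\ilr^{-1}(\bm{0})$. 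Under this structure, the defining constraint $\tilde{\bm{v}}(\emptyset)=\bm{0}$ of $\mathcal{G}_d^{\mathbb{R}^{D-1}}$ corresponds exactly to $\bm{v}(\emptyset)=\ilr^{-1}(\bm{0})$.

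Next I would define the pointwise lift $\Phi:\mathcal{G}_d^{\mathbb{R}^{D-1}}\to\mathcal{G}_d^{\mathcal{S}^D}$ by $\Phi(\tilde{\bm{v}})(S)=\ilr^{-1}(\tilde{\bm{v}}(S))$ and check that it is an Aitchison-linear bijection. Linearity follows directly from the isomorphism property $\ilr^{-1}(\alpha\tilde{\bm{a}}+\beta\tilde{\bm{b}})=\alpha\odot\ilr^{-1}(\tilde{\bm{a}})\oplus\beta\odot\ilr^{-1}(\tilde{\bm{b}})$; applying it pointwise yields $\Phi(\alpha\tilde{\bm{v}}+\beta\tilde{\bm{w}})=\alpha\odot\Phi(\tilde{\bm{v}})\oplus\beta\odot\Phi(\tilde{\bm{w}})$. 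Bijectivity is immediate because the pointwise application of $\ilr$ furnishes a two-sided inverse, and the empty-set constraint is preserved in both directions since $\ilr$ and $\ilr^{-1}$ exchange $\bm{0}$ and the barycenter.

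Finally I would invoke the elementary fact that a linear isomorphism carries a linear basis to a linear basis. Since the Lemma establishes that $\{\tilde{\bm{v}}_T^{(i)}\}$ is a basis of $\mathcal{G}_d^{\mathbb{R}^{D-1}}$, its image $\{\Phi(\tilde{\bm{v}}_T^{(i)})\}=\{\bm{v}_T^{(i)}=\ilr^{-1}(\tilde{\bm{v}}_T^{(i)})\}$ is an Aitchison linear basis of $\mathcal{G}_d^{\mathcal{S}^D}$, which is precisely the claim.

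I do not expect a genuine obstacle here: the whole content is that $\ilr$ is a linear isomorphism and that isomorphisms preserve bases. The only point demanding a little care is the bookkeeping of the two vector-space structures, namely ensuring that ``linear'' on the target is understood as \emph{Aitchison}-linear (using $\oplus$ and $\odot$) and that the empty-set constraint matches the neutral element $\ilr^{-1}(\bm{0})$, rather than any substantive mathematical difficulty.
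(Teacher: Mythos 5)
Your proposal is correct and is precisely the argument the paper relies on: the corollary is stated in the paper as an immediate consequence of the Lemma, the implicit justification being exactly that the pointwise $\ilr^{-1}$ lift is an Aitchison-linear isomorphism of the spaces of set functions and therefore maps the basis $\{\tilde{\bm{v}}_T^{(i)}\}$ to a basis. Your write-up simply makes this routine verification explicit, including the matching of the empty-set constraint with the neutral element, which is a faithful (if more detailed) rendering of the paper's reasoning.
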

where $\bm{v}_T^{(i)}$ can be referred as a $i$th-composition-unanimity game.

Thus, any composition-valued set function $\bm{v}$ can be written with a unique set of alphas as:
  \begin{equation}
    \forall S \subseteq \mathcal{I},~ \bm{v}(S) = \bigoplus_{\substack{T \subseteq \mathcal{I} \backslash \emptyset \\ 1 \leq i \leq D-1}} \alpha_T^{(i)} \odot \bm{v}_T^{(i)}(S).
  \end{equation}
  
Let's now prove that for any composition-valued set function $\bm{v} \in \mathcal{G}_d^{\mathcal{S}^D}$, there is an unique set $\{\bm{\phi}_k\}_{1 \leq k \leq d}$ of composition-valued function that satisfies the three axiomatic properties.
\begin{proof}
Let's first show there is an unique set $\{\bm{\phi}_k\}_{1 \leq k \leq d}$ of composition-valued function that satisfies the three axiomatic properties for a powered $i$th-composition-unanimity game $\beta \odot \bm{v}_{T}^{(i)}$ where $\beta \in \mathbb{R}$.

  With the \emph{linearity} and the \emph{efficiency} axiomatic properties, we have\footnote{In the core of the paper, the Shapley compositions are written as functions of a model. Here, without loss of generality, they are written as functions of a characteristic function.}:
  \begin{equation}
    \bigoplus_{k \in T} \bm{\phi}_k \left( \beta \odot \bm{v}_T^{(i)}\right) = \beta \odot \bm{e}^{(i)},~\text{and}~\forall j \notin T,~\bm{\phi}_j\left(\beta \odot v_T^{(i)}\right)=\bm{u},
  \end{equation}
  where $\{ \bm{e}^{(i)} = \ilr^{-1} \left( \tilde{\bm{e}}^{(i)} \right)\}_{1\leq i \leq D-1}$ forms an Aitchison linear basis of the simplex, and $\bm{u}\in\mathcal{S}^D$ is the uniform distribution, i.e. the neutral element for the perturbation: $\bm{u} = \ilr^{-1} \left( \bm{0} \right)$.

  Let $(k,l) \in T^2$, $\forall S \subseteq \mathcal{I} \backslash \{k,l\}$, we have $\beta \odot \bm{v}_T^{(i)} \left( S\cup \{k\} \right) = \beta \odot \bm{v}_T^{(i)} \left( S\cup \{l\} \right)$. Therefore, due to the \emph{symmetry}, $\bm{\phi}_k \left( \beta \odot \bm{v}_T^{(i)} \right) = \bm{\phi}_l \left( \beta \odot \bm{v}_T^{(i)} \right)$ such that $\forall k \in T$ the $\bm{\phi}_k$ are equal.
  
  Thus,
  \begin{equation}
  \forall k \in \mathcal{I},~~~
    \bm{\phi}_k\left( \beta \odot \bm{v}_T^{(i)} \right) = \left\lbrace
      \begin{array}{@{}ll@{}}
        \frac{\beta}{|T|} \odot \bm{e}^{(i)}, & \text{if}\ k \in T \\
        \bm{u}, & \text{otherwise.}
      \end{array}\right.
  \end{equation}

  Therefore, the set $\{ \bm{\phi}_k\}_{1 \leq k \leq d}$ of composition-valued function, respecting the axiomatic properties, is uniquely defined for a powered $i$th-composition-unanimity game.

  Finally, there is a unique set $\{ \bm{\phi}_k\}_{1 \leq k \leq d}$ that satisfies the three axiomatic properties for any composition-valued set function since such function is uniquely represented by a linear combination of $i$th-composition-unanimity games and the functions in $\{ \bm{\phi}_k\}_{1 \leq k \leq d}$ are linear with respect to the characteristic function.
  \end{proof}

\newpage
\section{Class-compositions}
\label{app:classcompo}
A $k$-class-composition $\bm{c}^{(k)}  \in \mathcal{S}^D$ is defined as an unit norm composition going straight to the direction of the $k$th class. This is a discrete probability distribution with maximum probability for the $k$th class and uniform values for the others. The $i$th part of $\bm{c}^{(k)}$ is:
\begin{equation}
    c_i^{(k)} = \left\lbrace
  \begin{array}{@{}ll@{}}
    1-(D-1)p, & \text{if}\ i=k \\
    p, & \text{otherwise,}
  \end{array}\right.
\end{equation}
where $0<p<\frac{1}{D}$. We want the Aitchison norm of each class-composition to be one:
\begin{equation}
  \begin{aligned}
    \forall k \in \{1, \dots D \},~~~~~~\lVert \bm{c}^{(k)} \rVert_{\mathcal{A}} = 1 \iff& \sqrt{\frac{1}{2D} \sum_{i=1}^D \sum_{j=1}^D \left( \log \frac{c_i^{(k)}}{c_j^{(k)}} \right)^2} = 1,\\
    \text{for clarity,}&\\\text{we drop the superscript $(k)$ from the equations,}&\\
    \iff& \sqrt{\frac{1}{2D} \sum_{i=1}^{D}\left( \left(D-1 \right) \left( \log \frac{c_i}{p} \right)^2 + \left( \log \frac{c_i}{1-(D-1)p} \right)^2 \right)} = 1,\\
    \iff& \sqrt{\frac{1}{2D} 2(D-1) \left( \log \frac{p}{1-(D-1)p} \right)^2} =1,\\
    \text{since $p<\frac{1}{D}$ and the norm is positive:}&\\
    \iff& \sqrt{\frac{D-1}{D}}\log \frac{1-(D-1)p}{p} =1,\\
    \iff& p = \frac{\exp \left( -\sqrt{\frac{D}{D-1}} \right)}{1+ (D-1)\exp \left( -\sqrt{\frac{D}{D-1}} \right )}.
  \end{aligned}
\end{equation}
To summarise, the $i$th part of a $k$-class-composition $\bm{c}^{(k)} \in \mathcal{S}^D$ is given by:
\begin{equation}
    c_i^{(k)} =
  \frac{1}{1+ (D-1)\exp \left( -\sqrt{\frac{D}{D-1}} \right )} \left( \left\lbrace \begin{array}{@{}ll@{}}
     1, & \text{if}\ i=k \\
     \exp \left( -\sqrt{\frac{D}{D-1}} \right), & \text{otherwise,}
  \end{array}\right.\right).
\end{equation}
In this way, $\bm{c}^{(k)}$ is going straight to the direction of class $k$ and uniformly against all the others with a unit norm.

\newpage
\section{Estimation of the Shapley compositions}
\label{app:algo}
This section presents the estimation algorithm we used to estimate the Shapley compositions in our experiments. The algorithm is an adaptation of Algorithm 2 in \cite{vstrumbelj2014explaining}. 

Let $d$ be the number of features. We want to optimally distribute $m_{\text{max}}$ drawn samples over the $d$ features. Let $\hat{\bm{\phi}_i}$ be the estimation of the Shapley composition for the $i$th feature. We want to minimise the sum of squared errors: $\sum_{i=1}^{d} \| \hat{\bm{\phi}_i} \ominus \bm{\phi}_i \|_{\mathcal{A}}^2$.

Since $\hat{\bm{\phi}_i}$ is a (Aitchison) sample mean we have: $\tilde{\hat{\bm{\phi}_i}} \approx \mathcal{N}\left(\tilde{\bm{\phi}}_i, \frac{1}{m_i}\bm{\Sigma}^{(i)}\right)$ and $\tilde{\hat{\bm{\phi}_i}} - \tilde{\bm{\phi}}_i \approx \mathcal{N}\left(\bm{0}, \frac{1}{m_i}\bm{\Sigma}^{(i)}\right)$ where the tilde refers to the ILR transformation.
Let $\bm{\Delta_i} = \tilde{\hat{\bm{\phi}_i}} - \tilde{\bm{\phi}}_i$ and $Z_i = \| \hat{\bm{\phi}_i} \ominus \bm{\phi}_i \|_{\mathcal{A}} = \| \tilde{\hat{\bm{\phi}_i}} - \tilde{\bm{\phi}}_i \|_2 = \| \bm{\Delta}_i \|_2$. The expectation of the sum of squared errors is:
\begin{equation}
  \begin{aligned}
    \mathbb{E}\left[ \sum_{i=1}^{d} Z_i^2 \right] &=   \sum_{i=1}^{d} \mathbb{E}\left[ Z_i^2 \right],\\
                                                  & =\sum_{i=1}^{d} \mathbb{E}\left[ \sum_{j=1}^{D-1}\Delta_{ij}^2 \right],\\
                                                  & =\sum_{i=1}^{d} \sum_{j=1}^{D-1} \mathbb{E}\left[\Delta_{ij}^2 \right],\\
                                                  & =\sum_{i=1}^{d} \sum_{j=1}^{D-1} \frac{1}{m_i}\Sigma_{jj}^{(i)}, \text{ since } \Delta_{ij}\approx \mathcal{N}\left(0, \frac{1}{m_i}\Sigma_{jj}^{(i)}\right),\\
    &= \sum_{i=1}^d \frac{1}{m_i} \tr \bm{\Sigma}^{(i)}.
  \end{aligned}
\end{equation}

When a sample is drawn, the feature for which the sample will be used for improving the Shapley composition estimation is chosen to maximise $\frac{\tr \bm{\Sigma}^{(i)}}{m_i} - \frac{\tr \bm{\Sigma}^{(i)}}{m_i+1}$. Like in \cite{vstrumbelj2014explaining}, this is summarised in Algorithm \ref{alg:2}.
\begin{algorithm}
   \caption{Adaptation of the Algorithm 1 from \cite{vstrumbelj2014explaining} for approximating the Shapley composition of the $i$th feature, with model $\bm{f}$, instance $\bm{x}\in\mathcal{X}$ and $m$ drawn samples.}
   \label{alg:1}
\begin{algorithmic}
   \STATE Initialise $\bm{\phi_i}\leftarrow \ilr^{-1}(\bm{0})$
   \FOR{$1$ {\bfseries to} $m$}
   \STATE Randomly select a permutation $\pi$ of the set of indexes $\mathcal{I}$,
   \STATE Randomly select a sample $\bm{w}\in\mathcal{X}$,
   \STATE Construct two instances:
   \begin{itemize}
     \item $\bm{b}_1$: which takes the values from $\bm{x}$ for the $i$th feature and the features indexed before $i$ in the order given by $\pi$, and takes the values from $\bm{w}$ otherwise,
     \item $\bm{b}_2$: which takes the values from $\bm{x}$ for the features indexed before $i$ in the order given by $\pi$, and takes the values from $\bm{w}$ otherwise.
     \end{itemize}
   \STATE $\bm{\phi}_i \leftarrow \bm{\phi}_i \oplus \bm{f}(\bm{b}_1) \ominus \bm{f}(\bm{b}_2) $
   \ENDFOR
   \STATE $\bm{\phi}_i \leftarrow \frac{1}{m}\odot \bm{\phi}_i$
\end{algorithmic}
\end{algorithm}

\begin{algorithm}
   \caption{Adaptation of the Algorithm 2 from \cite{vstrumbelj2014explaining} for approximating all the Shapley compositions by optimally distributing a maximum number of samples $m_{\text{max}}$ over the $d$ features, with model $\bm{f}$, instance $\bm{x}\in\mathcal{X}$ and $m_{\text{min}}$ the minimum number of samples for each feature estimation.}
   \label{alg:2}
   \begin{algorithmic}
     \STATE Initialisation: $m_{i} \leftarrow 0$, $\tilde{\bm{\phi}}_i \leftarrow \bm{0}$, $\forall i \in \{1, \dots d\}$,
     \WHILE{$\displaystyle \sum_{i=1}^d m_i < m_{\text{max}}$}
     \IF{$\forall i, m_i \geq m_{\text{min}}$}
     \STATE $j = \underset{i}{\text{argmax}} \left( \frac{\tr \bm{\Sigma}^{(i)}}{m_i} - \frac{\tr \bm{\Sigma}^{(i)}}{m_i+1} \right)$,
     \ELSE
     \STATE pick a $j$ such that $m_j < m_{\text{min}}$,
     \ENDIF
     \STATE $\tilde{\bm{\phi}}_j \leftarrow \tilde{\bm{\phi}}_j$ $+$ isometric-log-ratio transformed result of Algorithm \ref{alg:1} for the $j$th feature and $m=1$,
     \STATE update $\bm{\Sigma}^{(j)}$ using an incremental algorithm,
     \STATE $m_j \leftarrow m_j+1$
     \ENDWHILE
     \STATE $\tilde{\bm{\phi}}_i \leftarrow \frac{\tilde{\bm{\phi}}_i}{m_i}$, $\forall i \in \{1, \dots d\}$
     \STATE $\bm{\phi}_i \leftarrow \ilr^{-1} \left( \tilde{\bm{\phi}}_i\right)$
\end{algorithmic}
\end{algorithm}

\newpage
\section{Adjustement of the estimated Shapley compositions for efficiency}
\label{app:adjust}

In practice, the computation of the Shapley values has an exponential time complexity and we do not have necessarily access to the true distribution of the data. The Shapley values are therefore approximated using estimation algorithms like for instance the one presented in the previous section. However, since the obtained values are approximations, they do not necessarily respect the desired efficiency property. This point is often overlooked in the literature. In this section, we write down an adjustment strategy of the estimated Shapley compositions for them to respect the efficiency property. This is a similar strategy as in the sampling approximation of the Shapley values in the SHAP toolkit\footnote{\url{https://github.com/shap/shap/blob/master/shap/explainers/_sampling.py}}.

Let $\{\hat{\bm{\phi}}_i\}_{1\leq i \leq d}$ be the estimated Shapley compositions (given by the Algorithm \ref{alg:2} in our experiments). Let $\displaystyle \bm{s}_{err} = \bm{f}(\bm{x}) \ominus \bm{f}_0 \ominus \underset{i=1}{\overset{d}\bigoplus} \hat{\bm{\phi}}_i$, where $\bm{f}_0$ is the base distribution, be the error composition on the pertubation of all Shapley compositions, i.e.~the error that makes the efficiency property unfulfilled. In order to respect the efficiency property, we want this error to be the neutral element of the perturbation, i.e.~the ``zero'' in the sense of the Aitchison geometry: the uniform distribution. We could simply perturb each estimated Shapley compositions by $\frac{1}{d}\odot\bm{s}_{err}$ however this would move each Shapley composition by the same amount while we want to allow the Shapley compositions with a higher estimation variance (i.e.~with a precision likely to be lower) to move more than the Shapley compositions with a smaller estimation variance (i.e.~with a precision likely to be higher).

The $i$th adjustment is therefore weighted by a scalar $w_i = w\left(\tr\left(\bm{\Sigma}^{(i)}\right)\right)$, where $w$ is an increasing function, and where $\displaystyle \sum_{i=1}^d w_i= 1$. Similarly to the SHAP toolkit implementation, we choose $w$ as:
\begin{equation}
  w_i = w\left(\tr\left(\bm{\Sigma}^{(i)}\right)\right) = \frac{v_i}{\displaystyle 1+\sum_{j=1}^{d}v_j},\text{ where } v_i = \frac{\tr\left(\bm{\Sigma}^{(i)}\right)}{\displaystyle \epsilon \max_j \tr\left(\bm{\Sigma}^{(j)}\right)}.
\end{equation}
The $i$th estimated Shapley composition is then asjusted as follow:
\begin{equation}
  \hat{\bm{\phi}}_i \leftarrow \hat{\bm{\phi}}_i \oplus \left( w_i \odot \bm{s}_{err}\right).
\end{equation}
In this way, when $\epsilon$ goes to zero\footnote{In our experiments, $\epsilon = 10^{-6}$.}, the efficiency property is respected for the adjusted Shapley compositions and more weight is given to the adjustments of the Shapley compositions with a higher estimation variance.

\end{appendices}
\newpage
\section*{Acknowledgments}
The work of PF and MPN was supported by TAILOR\footnote{\url{https://tailor-network.eu}}, a European research network funded by the EU Horizon 2020 research and innovation programme under GA No 952215. This work wouldn't have happened without a research visit of PGN at the University of Bristol made possible by the TAILOR Connectivity Fund. The work of PGN and JFB was also supported by the LIAvignon chair.

We thank Telmo de Menezes e Silva Filho from the University of Bristol for suggesting parallel coordinates to visualise Shapley compositions. We also thank the anonymous reviewers for helpful comments.

\bibliography{biblio}

\end{document}